\def\x{\boldsymbol{x}}
\def\y{\boldsymbol{y}}
\def\u{\boldsymbol{u}}
\newtheorem{theorem}{Theorem}[section]
\newtheorem{proposition}[theorem]{Proposition}
\begin{document}

\title{Acceleration of the shiftable $O(1)$ algorithm for bilateral filtering and non-local means}
\author{Kunal~N.~Chaudhury\thanks{Correspondence: kchaudhu@math.princeton.edu. 
}}

\maketitle

\begin{abstract}
A direct implementation of the bilateral filter \cite{Tomasi1998} requires $O(\sigma_s^2)$ operations per pixel, where $\sigma_s$ is the (effective) width of the spatial kernel. A fast implementation of the bilateral filter 
was recently proposed in \cite{KcDsMu2011} that required $O(1)$ operations per pixel with respect to $\sigma_s$. This was done by using trigonometric functions for the range kernel of the bilateral filter, and by exploiting 
their so-called \textit{shiftability} property. In particular, a fast implementation of the Gaussian bilateral filter was realized by approximating the Gaussian range kernel using raised cosines. 
Later, it was demonstrated in \cite{Kc2011} that this idea could be extended to a larger class of filters, including the popular non-local means filter \cite{Baudes2005,Yaroslavsky1985}. 
As already observed in \cite{KcDsMu2011}, a flip side of this approach was that the run time depended on the width $\sigma_r$ of the range kernel. For an image with dynamic range $[0,T]$, the run 
time scaled as $O(T^2/\sigma^2_r)$ with $\sigma_r$. This made it difficult to implement narrow range kernels, particularly for images with large dynamic range. In this paper, we discuss this problem, and propose some simple steps 
to accelerate the implementation, in general, and for small $\sigma_r$ in particular. We provide some experimental results to demonstrate the acceleration that is achieved using these modifications.
\end{abstract}

\textbf{Keywords}:
Bilateral filter, non-local means, shiftability, constant-time algorithm, Gaussian kernel, truncation, running maximum, max filter, recursive filter, $O(1)$ complexity.

\section{Introduction}

The bilateral filter is an edge-preserving diffusion filter, which was introduced by Tomasi et al. in \cite{Tomasi1998}. The edge-preserving property comes from the use of a range kernel (along with the spatial kernel) that
is used to control the diffusion in the vicinity of edges. In this work, we will focus on the Gaussian bilateral filter where both the spatial and range kernels are Gaussian \cite{Tomasi1998}. This is given by
\begin{equation}
\label{BF}
\tilde{f}(\x)=\frac{1}{\eta} \int_{\Omega} g_{\sigma_s}(\x - \y) \ g_{\sigma_r}(f(\x - \y) - f(\x)) \ f(\x - \y) \ d\y
\end{equation}
where
\begin{equation*}
\eta = \int_{\Omega}  g_{\sigma_s}(\x - \y) \ g_{\sigma_r}(f(\x - \y) -f(\x))  \ d\y.
\end{equation*}
Here, $g_{\sigma_s}(\x)$ is the centered Gaussian distribution on the plane with variance $\sigma_s^2$, and $g_{\sigma_r}(s)$ is the one-dimensional Gaussian distribution with variance $\sigma_r^2$;
$\Omega$ is the support of $g_{\sigma_s}(\x)$ over which the averaging takes place. We call $g_{\sigma_s}(\x)$ and $g_{\sigma_r}(s)$ the spatial and the range kernel.

The range kernel is controlled by the local distribution of intensity. Sharp discontinuities (jumps) in intensity typically occur in the vicinity of edges. This is
picked up by the range kernel, which is then used to inhibit the spatial diffusion. On the other hand, the range kernel becomes inoperative in regions with smooth variations in intensity. 
The spatial kernel then takes over, and the bilateral filter behaves as a standard diffusion filter. Together, the spatial and range kernels perform smoothing in homogeneous regions, while preserving edges at the same time
\cite{Tomasi1998}. 

The bilateral filter has found widespread use in several image processing, computer graphics, and computer vision applications \cite{Bennet, VideoAbstraction, Ramanath, Xiao, Yang_BLF_stereo}; see \cite{DurandApps} for further applications. 
More recently, the bilateral filter was extended by Baudes et al. \cite{Baudes2005} in the form of the non-local means filter, where the similarity between pixels is measured using patches centered around the pixel.

\subsection{Fast bilater filter}

The direct implementation of \eqref{BF} is computationally intensive, especially when $\sigma_s$ is large ($\sigma_r$ has no effect on the run time in this case). In particular, the direct implementation 
requires $O(\sigma_s^2)$ operations per pixel. This makes the filter slow for real-time applications. Several efficient algorithms have been proposed in the past for implementing the filter in real time, 
e.g., see \cite{Durand,Weiss,Paris,Yang,Gunturk2011,Adams2010}. In \cite{Porikli}, Porikli demonstrated for the first time that the bilateral filter could be implemented using $O(1)$ operations per pixel (with respect to $\sigma_s$).
This was done for two different settings: $(a)$ Spatial box filter and arbitrary range filter, and $(b)$ Arbitrary spatial filter and polynomial range filter. 
The author extended $(b)$ to the Gaussian bilateral filter in \eqref{BF} by approximating $g_{\sigma_r}(s)$ with its Taylor polynomial. The run time
of this approximation was linear in the order of the polynomial. The problem with Taylor polynomials, however, is that they provide good approximations of $g_{\sigma_r}(s)$ only locally around the origin. In particular, they have the following drawbacks:

\begin{itemize}
\item  Taylor polynomials are not guaranteed to be positive and monotonic away from the origin, where the approximation is poor. Moreover, they tend to blow up at the tails. 
 
\item  It is difficult to approximate $g_{\sigma_r}(s)$ using the Taylor expansion when $\sigma_r$ is small. In particular, a large order polynomial is required to get a good approximation of a narrow Gaussian, and this 
considerably increases the run time of the algorithm.  
\end{itemize}

The first of these problems was addressed in \cite{KcDsMu2011}. In this paper, the authors observed that it is important that the kernel used to approximate $g_{\sigma_r}(s)$ be positive, monotonic, and symmetric. While it is easy to ensure symmetry,
the other two properties are hard to enforce using Taylor approximations. It was noticed that, in the absence of these properties, the bilateral filter in \cite{Porikli} created strange artifacts in the processed 
image (cf. Figure 3 in \cite{KcDsMu2011}). The authors proposed to fix this problem using the family of raised cosines, namely, functions of the form
\begin{equation}
\label{RC}
\phi(s)=\Big [ \cos \left(\frac{\pi s}{2T} \right) \Big]^N \qquad (-T \leq s \leq T).
\end{equation}
Here $N$ is the order of the kernel, which controls the width of $\phi(s)$. The kernel can be made narrow by increasing $N$. 

The key parameter in \eqref{RC} is the quantity $T$. The idea here is that $[\cos(s)]^N$ is guaranteed to be positive and monotonic provided that $s$ is restricted to the interval $[-\pi/2, \pi/2]$. 
Note that the argument $s$ in \eqref{RC} takes on the values $|f(\x -\y) - f(\x)|$ as $\x$ and $\y$ varies over the image. Therefore, by letting 
\begin{equation*}
T = \max_{\x} \ \max_{\y \in \Omega} \ |f(\x-\y) - f(\x)|, 
\end{equation*}
one could guarantee $\phi(s)$ to be positive and monotonic over $[-T,T]$. In \cite{KcDsMu2011}, $T$ was simply set to the maximum dynamic range, for example, $255$ for grayscale images. 
We refer the readers to Figure 2 in \cite{KcDsMu2011} for a comparison of \eqref{RC} and the polynomial kernels used in \cite{Porikli}. It was later observed in \cite{Kc2011} that polynomials could also be used for the same purpose. 
The polynomials suggested were of the form
\begin{equation}
\label{RP}
\phi(s)=\left( 1 - \frac{s^2}{T^2}  \right)^N \qquad (-T \leq s \leq T).
\end{equation}

\subsection{Fast $O(1)$ implementation using shiftable kernels}

For completeness, we now explain how the above kernels can be used to compute \eqref{BF} using $O(1)$ operations. As observed in \cite{Kc2011}, \eqref{RC} and \eqref{RP} are essentially the simplest kernels that have the so-called property 
of \textit{shiftability}. This means that, for a given $N$, we can find a fixed set of basis functions $\phi_1(s),\ldots,\phi_{N}(s)$ and coefficients $c_1,\ldots,c_{N}$, so that for any translation $\tau$, we can write
\begin{equation}
\phi(s - \tau) = c_1(\boldsymbol{\tau}) \phi_1(s)+ \cdots + c_N(\boldsymbol{\tau}) \phi_{N}(s).
\end{equation}
The coefficients depend continuously on $\tau$, but the basis functions have no dependence on $\tau$. For \eqref{RC}, both the basis functions and coefficients are cosines, while they are polynomials for \eqref{RP}.
This shiftability property is at the heart of the $O(1)$ algorithm. Let $\overline{f}(\x)$ denote the output of the Gaussian filter $g_{\sigma_s}(\x)$ with neighborhood $\Omega$, 
\begin{equation}
\label{GF}
\overline{f}(\x) = \int_{\Omega} g_{\sigma_s}(\x-\y) f(\y) \ d\y.
\end{equation}
Note that, by replacing $g_{\sigma_r}(s)$ with $\phi(s)$, we can write \eqref{BF} as
\begin{equation}
\label{f1}
\tilde{f}(\x) =  \frac{1}{\eta}   \Big[ c_1(f(\x)) \ \overline{F_1}(\x) + \cdots  c_{N}(f(\x)) \ \overline{F_{N}}(\x) \Big],
\end{equation}
where we have set $F_i(\x) = f(\x)\phi_i(f(\x))$. Similarly, by setting $G_i(\x) = \phi_i(f(\x))$, we can write
\begin{equation}
\label{f2}
\eta =  c_1(f(\x)) \ \overline{G_1}(\x) + \cdots + c_{N}(f(\x)) \ \overline{G_{N}}(\x) .
\end{equation}
Now, it is well-known that certain approximation of \eqref{GF} can be computed using just $O(1)$ operations per pixel. These recursive algorithms are based on specialized kernels, such as the box and 
the hat function \cite{Heckbert,Crow}, and the more general class of box splines \cite{kunal_tip}. Putting all these together, we arrive at the following $O(1)$ algorithm for approximating \eqref{BF}:

\begin{enumerate}

\item Fix $N$, and approximate $g_{\sigma_r}(s)$ using \eqref{RC} or \eqref{RP}.

\item For $i =1,2,\ldots,N$, set up the images $F_i(\x) = f(\x)\phi_i(f(\x))$ and $G_i(\x) = \phi_i(f(\x))$, and the coefficients  $c_i(f(\x))$ .

\item Use a recursive $O(1)$ algorithm to compute each $\overline{F_i}(\x)$ and $\overline{G_i}(\x)$.

\item Plug these into \eqref{f1} and \eqref{f2} to get the filtered image.

\end{enumerate}

It is clear that better approximations are obtained when $N$ is large. On the other hand, the run time scales linearly with $N$. One key advantage of the above algorithm, however, is 
that the $\overline{F_i}(\x)$ and $\overline{G_i}(\x)$ can be computed in parallel. For small orders ($N < 10$), the serial implementation is found to be comparable, 
and often better, than the state-of-the-art algorithms. The parallel implementation, however, turns out to be much faster than the competing algorithms, at least for $N < 50$.  Henceforth, we will refer to
the above algorithm as \texttt{SHIFTABLE-BF}, the shiftable bilateral filter.

\subsection{Gaussian approximation for small $\sigma_r$}

This brings us to the question as to whether we can always work with, say, $N < 50$ basis functions, in \texttt{SHIFTABLE-BF}? To answer this question, we must explain in some detail step (1) of the algorithm, where we 
approximate the Gaussian range kernel,
\begin{equation*}
g_{\sigma_r}(s) = \exp\left(-\frac{s^2}{2\sigma_r^2}\right), 
\end{equation*}
on the interval $[-T,T]$. This could be done either using \eqref{RC},
\begin{equation}
\label{RCconv}
g_{\sigma_r}(s) =  \lim_{N \longrightarrow \infty} \left[\cos \left( \frac{s}{ \sqrt N \sigma}\right)\right]^N,
\end{equation}
or, using \eqref{RP},
\begin{equation}
\label{RPconv}
g_{\sigma_r}(s) =  \lim_{N \longrightarrow \infty} \left( 1 - \frac{s^2}{2 N \sigma^2}  \right)^N.
\end{equation}
These approximations were proposed in \cite{KcDsMu2011,Kc2011}. Note that, we have to rescale \eqref{RC} by $\sqrt{N}$, and \eqref{RP} by $N$, to get to the right limit. On the other hand, to ensure positivity and monotonicity, we 
need to guarantee that the arguments of \eqref{RCconv} and \eqref{RPconv} are in the intervals $[-\pi/2, \pi/2]$ and $[0,1]$. A simple calculation shows that this is the case provided that $N$ is larger than 
$N_0 = 4T^2/\pi^2 \sigma_r^2 = 0. 405 (T/\sigma_r)^2$ for the former, and $N_0 = 0. 5 (T/\sigma_r)^2$ for the latter. In other words, it is not sufficient to set $N$ large -- it must be at least be as large as $N_0$. 
In Table \ref{table1}, we give the values of $N_0$ for different values of $\sigma_r$ when $T=255$. It is seen that $N_0$ gets impracticable large for $\sigma_r < 30$. 
This does not come as a surprise since it is well-known that one requires a large number of trigonometric functions (or polynomials) to closely approximate a narrow Gaussian on a large interval.  
As pointed out earlier, this was also one of the problems in \cite{Porikli}.

\begin{table}[!htbp]
\caption{The threshold $N_0$ for different $\sigma_r$ ($T =255$).} 
\label{spread} 
\centering
\begin{tabular}{|c|cccccccc|}
\hline
$\sigma_r$   & 5 &  10 & 20 &  30  & 40 &  60  &  80 & 100  \\
\hline
$N_0$        & 1053  &  263  &  66 &  29 &  16  &  7 & 4 & 3    \\
\hline
\end{tabular}
\label{table1}
\end{table}

\subsection{Present Contributions}

In this paper, we address the above problem, namely that $N_0$ grows as $O(T^2/\sigma_r^2)$ with $\sigma_r$. In Section \ref{algo for T}, we propose a fast algorithm for determining $T$ exactly. Besides cutting down $N_0$, this 
is essential for determining the (local) dynamic range of a grayscale image that has been deformed, e.g., by additive noise. Setting $T=255$ in this case can lead to artifacts in the processed image. 
Next, in Section \ref{algo for sigma}, we provide a simple and practical means of reducing the order, which leads to quite dramatic reductions in the run time of \texttt{SHIFTABLE-BF}. 
These modifications are also applicable to the shiftable algorithms proposed in \cite{KcDsMu2011,Kc2011}. Finally, in Section \ref{experiments}, we provide
some experimental results to demonstrate the acceleration that is achieved using these modifications. We also compare our algorithm with the Porikli's algorithms \cite{Porikli}, both in terms of speed and accuracy.

\section{Fast algorithm for finding $T$}
\label{algo for T}

For the rest of the discussion, we work with finite-sized images (bounded $\Omega$) on the Cartesian grid. We continue to use $\x$ and $\y$ to denote points on the grid. The integral in \eqref{BF} is simply replaced by a 
finite sum over $\Omega$. We will use the norm $\lVert \x \rVert = |x_1| + |x_2|$, where $\x = (x_1,x_2)$. Without loss of generality, we assume that $\Omega$ is a square neighborhood, 
that is, $\Omega = \{\x : \lVert \x \rVert \leq R\}$ where, say, $R=3\sigma_s$ (if $\Omega$ is not rectangular, we take the smallest rectangle containing $\Omega$). 

Note that, for a given $\sigma_r$, we can cut down $N_0$ by using a tight estimate for 
\begin{equation}
\label{bound}
T = \max_{\x} \ \max_{\lVert \y \rVert \leq R} \ |f(\x-\y) - f(\x)|. 
\end{equation}
The smaller the estimate, the lower is the threshold $N_0$. The point is that the worst-case estimate $T= 255$ is often rather loose for grayscale images. For example, we give
the exact values of $T$ for a test image in Table \ref{table2}, computed at different values of $\sigma_s$. We also give the time required to compute $T$.

\begin{table}[!htbp]
\caption{Exact values of $T$ for the standard $512 \times 512$ \textit{Lena}. Also shown is the time needed to compute $T$ using Matlab.} 
\label{spread} 
\centering
\begin{tabular}{|c|ccccccc|}
\hline
$\sigma_s$                         &  1   &  3     &  5    &  10    & 15    &  20 & 30  \\
\hline
$T$                                & 153  &  205   &  208  &  210   & 211   & 215 & 215\\
\hline
$\mathrm{time \ (sec)}$            &  2.06    &  2.13    &  2.33   &  2.71    & 3.26    & 4.01 & 5.60 \\
\hline
\end{tabular}
\label{table2}
\end{table}

It is seen that the exact values of $T$ are indeed much less than the worst-case estimate, particularly for small $\sigma_s$. For $\sigma_s = 3$, $T$ is only about $205$. Even for $\sigma_s$ as large  as $30$, $T$ is about 
$215$. Consider the bilateral filter with $\sigma_s = 10$ and $\sigma_r = 10$. From Table \ref{table1}, $N_0 = 263$ using $T = 255$. However, using the exact value $T = 210$, we can bring this down
to $263 \cdot (210/255)^2 \approx 178$, a reduction by almost $100$. For smaller values of $\sigma_r$, this gain is even more drastic. However, notice the time required to compute $T$ in Table \ref{table1}. This increases 
quickly with the increase in $\sigma_s$ (in fact, scales as $O(R^2)$). Experiments show us that, for large $\sigma_s$, this is comparable to the time required to compute the bilateral filter. It would thus help to have an $O(1)$ algorithm for computing $T$.
Motivated by our previous work on filtering using running sums \cite{kunal_tip}, we recently devised an algorithm that does exactly this. We later found that the algorithm had already been 
discovered two decades back in a different context \cite{Herk1992,GilWerman1992}.

Our algorithm is based on the following observations. First, note that we can take out the modulus from \eqref{bound} using symmetry. 

\begin{proposition}[Simplification]
\begin{equation}
\label{MaxFilter}
T = \max_{\x}  \left[ f(\x) -  \max_{\lVert \y \rVert \leq R} \ f(\x-\y) \right]. 
\end{equation}
\end{proposition}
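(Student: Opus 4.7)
The plan is to remove the absolute value in \eqref{bound} by exploiting the symmetry of the neighborhood $\Omega = \{\y : \lVert \y \rVert \leq R\}$ under the reflection $\y \mapsto -\y$. Using the identity $|a| = \max(a, -a)$ and swapping the outer max over $\x,\y$ with the pointwise max over the two signs, I would first rewrite
\[
T = \max\!\Bigl(\max_{\x,\,\lVert\y\rVert \leq R}\!\bigl(f(\x-\y) - f(\x)\bigr),\; \max_{\x,\,\lVert\y\rVert \leq R}\!\bigl(f(\x) - f(\x-\y)\bigr)\Bigr).
\]

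The main step is then to show these two joint maxima coincide. In the first, I would apply the change of variables $\x \mapsto \x + \y$ on the grid (a bijection under whatever boundary extension is implicitly used to keep \eqref{bound} well-defined); this transforms $f(\x-\y) - f(\x)$ into $f(\x) - f(\x+\y)$. The further substitution $\y \mapsto -\y$ is admissible precisely because $\Omega$ is symmetric under it, and it converts the expression into $f(\x) - f(\x-\y)$, matching the second joint maximum term-for-term. Hence the two halves agree, leaving a single joint maximum of a signed difference, and pulling the inner $\max_{\y}$ past the $\y$-independent term $f(\x)$ yields the form stated in \eqref{MaxFilter}.

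The only subtlety is the change of variables $\x \mapsto \x + \y$ on a finite grid, since the shift can send an interior pixel off the image. Under any fixed boundary convention (mirror, periodic, or constant padding) that is implicit in defining $T$ in \eqref{bound}, the shift is a genuine bijection on the index set and the argument is airtight; otherwise a short separate check is needed for pixels within $R$ of the boundary. This is the only place where any care is required --- the rest is routine manipulation of $\max$ operators, and no properties of the image beyond symmetry of $\Omega$ are used.
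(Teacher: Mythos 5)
Your proof is correct and follows essentially the same route as the paper: split $|t|=\max(t,-t)$ into two signed maxima and identify them via the involution $(\x,\y)\mapsto(\x-\y,-\y)$ of the feasible set. The paper phrases this symmetry pointwise --- it takes a maximizing pair $(\x_0,\y_0)$ for one half and exhibits the reflected pair as feasible for the other --- which automatically dispenses with your boundary worry, since both pixels of the reflected pair are already referenced in the original expression.
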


\begin{proof}
This follows from the observations that $|t| = \max(t,-t)$, and that $\lVert \x - \y \rVert \leq R$ is symmetric in $\x$ and $\y$. Moreover, note that the operation that takes two numbers $a$ and $b$ and returns $\max(a,b)$
is associative. Using associativity, we can write \eqref{bound} as $T = \max(T_{+}, T_{-})$, where
\begin{equation}
T_{+} = \max_{\x} \left[ f(\x) -  \ \max_{\lVert \y \rVert \leq R} \ f(\x-\y) \right], 
\end{equation}
and
\begin{equation}
T_{-} =  \max_{\x} \left[ \max_{\lVert \y \rVert \leq R} \ f(\x-\y) - f(\x) \right], 
\end{equation}
We claim that $T_{+} = T_{-}$, so that we need not compute them separately. Indeed, suppose that the first maximum is attained at $\x_0$ and $\y_0$, that is, $T_{+} = f(\x_0) - f(\x_0-\y_0)$. Taking $\x = \x_0 - \y_0$
and $\y = - \y_0$, and noting that $\lVert \x-\y \rVert \leq R$, we must have
\begin{equation*}
T_{-} \geq  f(\x-\y) - f(\x) = f(\x_0) - f(\x_0-\y_0) = T_{+}. 
\end{equation*}
By an identical argument, $T_{+} \geq T_{-}$, and the proposition follows.
\end{proof}

The problem is now reduced to that of computing the windowed maximums in \eqref{MaxFilter}. A direct computation would still require $O(R^2)$ comparisons. It turns out that we can do this very fast (no matter how large is $R$)
 by exploiting the overlap between adjacent windows. This is done by adapting the so-called \texttt{MAX-FILTER} algorithm.

\begin{proposition}[Max-Filter Algorithm]
There is an $O(1)$ algorithm for computing
\begin{equation*}
\max_{\lVert \y \rVert \leq R}   \ f(\x-\y) 
\end{equation*}
at every $\x$.
\end{proposition}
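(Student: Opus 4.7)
The plan is to reduce the 2D maximum to two 1D maxima by separability, and then to invoke the classical sliding-window maximum algorithm of van Herk and Gil-Werman. Since the $\ell_1$ ball $\{\y : \lVert \y \rVert \leq R\}$ sits inside the square $[-R,R]^2$ (and the paper already indicates replacing a non-rectangular neighbourhood by its enclosing rectangle), and since $\max$ is associative, I would write
\begin{equation*}
\max_{\lVert \y \rVert \leq R} f(\x - \y) \;=\; \max_{|y_2| \leq R} \Big( \max_{|y_1| \leq R} f(x_1 - y_1,\, x_2 - y_2) \Big).
\end{equation*}
Thus the 2D problem decomposes into a row-wise 1D max filter followed by a column-wise 1D max filter of window size $W = 2R+1$; a per-pixel $O(1)$ bound for the 1D problem automatically yields the same bound in 2D.

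For the 1D routine, given a signal $g[n]$, I would partition the index range into consecutive blocks of length $W$. Within each block $B = [a,\,a+W-1]$ I precompute two arrays using a single comparison per entry,
\begin{equation*}
P[n] = \max_{a \leq i \leq n} g[i], \qquad Q[n] = \max_{n \leq i \leq a+W-1} g[i],
\end{equation*}
via the recurrences $P[n] = \max(P[n-1],\,g[n])$ and $Q[n] = \max(Q[n+1],\,g[n])$. The key observation is that every window $[n-R,\, n+R]$ has length exactly $W$ and therefore crosses at most one block boundary $b$; the window then splits as $[n-R,\,b-1] \cup [b,\,n+R]$, which is a suffix of one block followed by a prefix of the next. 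Hence $\max_{|k|\leq R} g[n-k] = \max\bigl(Q[n-R],\,P[n+R]\bigr)$, a single comparison. Tallying: one comparison per entry for $P$, one for $Q$, and one for the final merge, giving at most three comparisons per pixel independent of $R$, which is the desired $O(1)$ bound.

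The main obstacle is bookkeeping rather than mathematics: partial blocks at the two ends of the signal, and windows close to the image boundary where fewer than $W$ samples are available, require a consistent convention (e.g.\ symmetric extension or clamping) so that no comparison is missed or double-counted. A secondary subtlety is that the algorithm as described computes the maximum over the square $[-R,R]^2$ rather than over the $\ell_1$ ball used in \eqref{bound}; this produces an upper estimate of $T$, which is still perfectly adequate for bounding $N_0$ and avoids the need to rotate coordinates or to decompose the diamond into axis-aligned strips.
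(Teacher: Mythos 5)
Your proposal is correct and follows essentially the same route as the paper: separability reduces the problem to one dimension, and the one-dimensional window maximum is computed with the van Herk/Gil--Werman block decomposition into prefix and suffix running maxima, giving three comparisons per pixel independent of $R$ --- exactly the argument in the paper's appendix. Your closing remark that iterating along rows and columns actually computes the maximum over the enclosing square rather than the $\ell_1$ ball is a fair observation of a point the paper glosses over, but it does not change the approach.
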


This algorithm was first proposed by van Herk, and Gil and Werman \cite{Herk1992,GilWerman1992}. It is clear that since the search domain $\Omega$ is separable, it suffices to solve the problem in one dimension. 
The problem in two-dimensions can be solved simply by iterating the one-dimensional \texttt{MAX-FILTER} along each dimension. From \eqref{MaxFilter}, we arrive at Algorithm \ref{algo1} for computing $T$ with $O(1)$ operations. We 
note that Algorithm \ref{algo1} does not actually compute $\max \ \{ |f(\x-\y) -f(\x)|: \lVert \y \rVert \leq R \}$ at every $\x$. It only has access to the distribution of the maximums. For completeness, we have 
explained the \texttt{MAX-FILTER} algorithm in the Appendix. For further details, we refer the readers to \cite{Herk1992,GilWerman1992}.

\begin{algorithm}[!htb]
\caption{Fast algorithm for computing $T$}
\label{algo1}
\begin{algorithmic}
     \State \textbf{Input}: Image $f(\x)$ and  $R$.
      \State \textbf{Return}: $T$ as in \eqref{bound}.
     \State 1. Set $R$ as window of \texttt{MAX-FILTER}.
     \State 2. Apply \texttt{MAX-FILTER} along each row of $f(\x)$; return image $m(\x)$. 
     \State 3. Apply \texttt{MAX-FILTER} along each column of $m(\x)$; return image $M(\x)$.
     \State 4. Set $T$ as the maximum of $f(\x) - M(\x)$ over all $\x$.
\end{algorithmic}
\end{algorithm}	

\section{Acceleration using truncations}
\label{algo for sigma}

\begin{figure}
\centering
\includegraphics[width=1.0\linewidth]{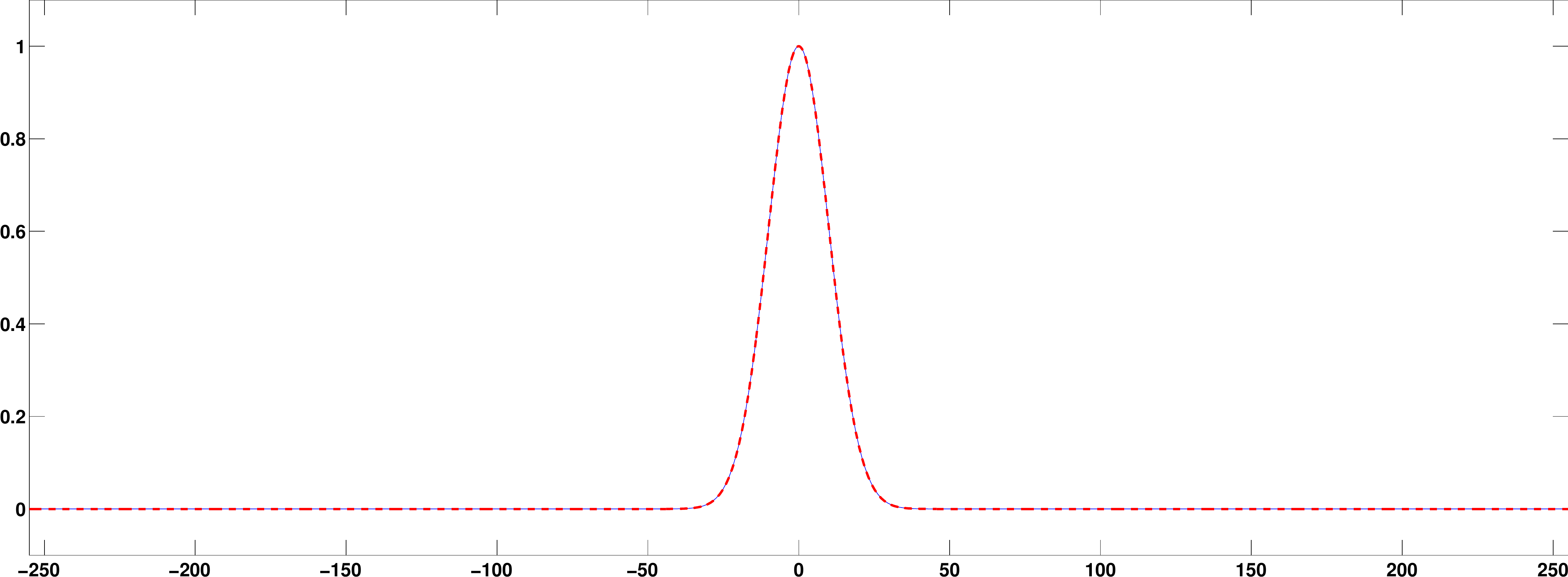} 
\includegraphics[width=1.0\linewidth]{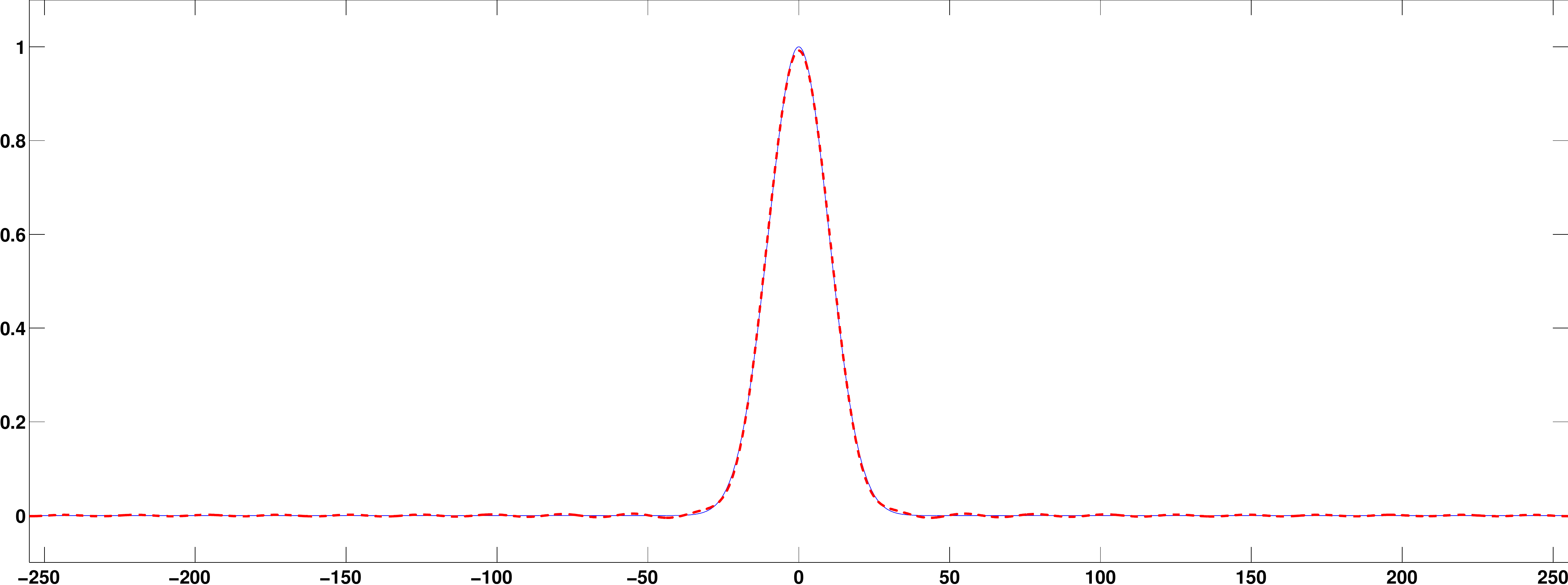} 
\caption{Approximation of the target Gaussian $g_{\sigma_r}(s)$  on the interval $[-255,255]$ using the raised cosine $\phi(s)$ in \eqref{RCconv}. 
We have used $\sigma_r=10$ in this case, for which $N_0=263$. \textbf{Top}: The solid blue line shows the target Gaussian $g_{\sigma_r}(s)$, and the broken red line is the raised cosine $\phi(s)$. 
We truncate $\phi(s)$ to obtain $\phi_{\varepsilon}(s)$ in \eqref{approx}, where we used $\varepsilon=0.005$. In this case, $M = 111$. Thus, a total of $2\times M = 222$ terms are dropped from the series. 
The truncation $\phi_{\varepsilon}(s)$ has only $42$ terms. \textbf{Bottom}: The solid blue line is $g_{\sigma_r}(s)$, and the broken red line is $\phi_{\varepsilon}(s)$. Note that the quality of approximation 
is reasonably good even after discarding almost $84 \%$ terms. As a result of the truncation, small oscillations of size $\varepsilon$ emerge on the tails. The approximation around the origin is positive and monotonic.}
\label{fig1}
\end{figure}

We have seen that, by using the exact value of $T$, we can bring down the run time by $10- 20\%$. Unfortunately, Table \ref{table1} tells us that this alone is not sufficient in the regime $\sigma_r < 15$.
For example, $N_0$ is of the order $10^3$ in the regime $\sigma_r < 5$. So why do we require so many terms in \eqref{RCconv} and \eqref{RPconv} to approximate a narrow Gaussian? This is exactly because we are forcing 
$\phi(s)$ to positive and monotonic on its broad tails, where $g_{\sigma_r}(s)$ is close to zero. For example, consider the approximation in \eqref{RCconv}:
\begin{equation}
\label{expansion}
\phi(s)= \sum_{n=0}^N \ 2^{-N} \binom{N}{n} \cos \left( \frac{ (2n-N) s}{\sqrt N \sigma_r }\right).
\end{equation}
By requiring $N > N_0$, we can guarantee that (1) $\phi(s)$ is close to $g_{\sigma_r}(s)$, and (2) $\phi(s)$ is positive and monotonic over $[-T,T]$. Note, however, that $g_{\sigma_r}(s)$ falls off very fast, and almost 
vanishes outside $\pm 3 \sigma_r$. It turns out that only a few significant terms in \eqref{expansion} contribute to the approximation in the $\pm 3\sigma_r$ region. The rest of the terms have a negligible 
contribution, and are required only to force positivity at the tails. 

\begin{table}[!htbp]
\caption{The threshold $N_0$ before and after truncation, tolerance $\varepsilon = 0.05$ (worst-case setting $T =255$).} 
\label{spread} 
\centering
\begin{tabular}{|c|cccccc|}
\hline
$\sigma_r$                 &  3 &  5  &  8 & 10 & 12 & 15  \\
\hline
$N_0$  (before)          &  2929 &  1053 &  413 &  263 & 184  &  119    \\
\hline
$N_0$  (after)          &  95 &  77 &  53 &  43 &  36 &  29    \\
\hline
$\%$ of terms dropped   & 96   &  92 &  88 &  85  &  82 &  77   \\
\hline
\end{tabular}
\label{table3}
\end{table}

Thanks to expression \eqref{expansion}, it is now straightforward to determine which are the significant terms. Note that the coefficients $2^{-N} \binom{N}{n}$ in \eqref{expansion} are positive and sum up to one. 
 In fact, they are unimodal and closely follow the shape of the target Gaussian. The smallest coefficients are at the tails,  and the largest coefficients are at the center. In particular, the smallest 
 coefficient is $1/2^N$, while the largest one is $2^{-N}[(N/2)!]^{-2}N!$ (assuming $N$ to be even). For large $N$, the latter is approximately 
 $\sqrt{2/\pi N}$ using Stirling's formula. Thus, as $N$ gets large, the coefficients get smaller. What is perhaps significant is that the ratio of the smallest to the largest coefficient is $(\pi N 2^{2N-1})^{-1/2}$, and this 
 keeps shrinking at an exponential rate with $N$. On the other hand, the cosine functions (which act as the interpolating function) are always bounded between $[-1,1]$.

The above observation suggests dropping the small terms on the tail. In particular, for a given tolerance $\varepsilon >0$, let $M=M(\varepsilon)$ be the smallest term for which 
\begin{equation}
\label{trunc}
 \sum_{n=0}^{M+1}  2^{-N} \binom{N}{n}> \varepsilon/2,
\end{equation}
and set
\begin{equation}
\label{approx}
\phi_{\varepsilon}(s) = \sum_{n=M}^{N-M} \ 2^{-N} \binom{N}{n} \cos \left( \frac{ (2n-N) s}{\sqrt N \sigma_r }\right).
\end{equation}
It follows that the error $|\phi(s)-\phi_{\varepsilon}(s)|$ is within $\varepsilon$ for all $-T \leq s \leq T$. Note that $\phi_{\varepsilon}(s)$ is symmetric, but is no longer guaranteed to be 
positive on the tails, where oscillations begin to set in. However, what we can guarantee is that the negative overshoots are within $- \varepsilon$. In fact, the quality of the final approximation turns out to be quite satisfactory.
This is illustrated with an example in Figure \ref{fig1}. The main point is that, in the regime $\sigma_r < 15$, we are now able to bring down the order to well within $100$.
We list some of them in Table \ref{table3}. Notice that we can drop more terms for a given accuracy as the kernel gets narrow. For $\sigma_r < 5$, we can drop almost $95\%$ of the terms, while keeping
the error within $0.5\%$ of the peak value.

\begin{algorithm}[!htb]
\caption{Improved \texttt{SHIFTABLE-BF}}
\label{algo2}
\begin{algorithmic}
    \State \textbf{Input}: Image $f(\x)$, variances $\sigma_s^2$ and $\sigma_r^2$, and tolerance $\varepsilon$.
    \State \textbf{Return}: Bilateral filtered image $\tilde{f}(\x)$.
    \State 1. Set $R$ to some factor of $\sigma_s$ (determined by size of spatial Gaussian).
    \State 2. Input $f(\x)$ and $R$ to Algorithm \ref{algo1}, and get $T$.
    \State 3. Set $N =  0. 405 (T/\sigma_r)^2$.
    \State 4. Assign $M$ using the following rule:
        \State \hspace{5mm}   (a)  [$\sigma_r > 40$] Set $M=0$.
        \State \hspace{5mm}   (b) [$10 < \sigma_r \leq 40$] Compute $M$ from \eqref{trunc}, given $N$ and $\varepsilon$.
        \State \hspace{5mm}    (c) [$ \sigma_r \leq 10$]  Plug $N$ and $\varepsilon$ into \eqref{Chernoff} to get $M$.
     \State 5. Use $N$ and $M$ to specify $\phi_{\varepsilon}(\x)$ in \eqref{approx}.
     \State 6. Using $\phi_{\varepsilon}(\x)$ as the range kernel, input $f(\x)$ to \texttt{SHIFTABLE-BF} to get $\tilde{f}(\x)$.
 \end{algorithmic}
\end{algorithm}

Note that \eqref{trunc} actually requires us to compute a large number of tails coefficients, which are eventually not used in \eqref{approx}. It is thus better to estimate $M$ when $N$ is large.
A good estimate of \eqref{trunc} is provided by the Chernoff bound for the binomial distribution \cite{Alon}, namely,
\begin{equation*}
\sum_{n=0}^{M}  2^{-N} \binom{N}{n} \leq \exp\left(-\frac{(N - 2M)^2}{4N} \right).
\end{equation*}
It can be verified the estimate is quite tight for $N > 100$. By setting the bound to $\varepsilon/2$, we get 
\begin{equation}
 \label{Chernoff}
 M = \frac{1}{2}\left(N - \sqrt{4N\log(2/\varepsilon)}\right).
\end{equation}
The final algorithm obtained by combining the proposed modifications is given in Algorithm \ref{algo2}. Henceforth, we will continue to refer to this as the \texttt{SHIFTABLE-BF}. The Matlab implementation of \texttt{SHIFTABLE-BF}
can be found here \cite{Mcode2012}.

\section{Experiments}
\label{experiments}

We now provide some results on synthetic and natural images to understand the improvements obtained used our proposal. While all the experiments were done on Matlab, we took the opportunity to report the run time of a multithreaded Java 
implementation of Algorithm \ref{algo2}. All experiments were run on an Intel quad core $2.83$ GHz processor. 

\subsection{Run time}

First, we tested the speedup obtained using Algorithm \ref{algo1} for computing $T$. We used a Matlab implementation of this algorithm \cite{Mcode2012}. It is expected that the run time remain roughly the same for different $\sigma_s$. As seen in table \ref{table4}, this is indeed the case. The run time of the direct method, for the same image and the same settings of $\sigma_s$, was already provided in table \ref{table2}.
Note that we have been able to cut down the time by a few orders using our fast algorithm. 

\begin{table}[!htbp]
\caption{Average time required to compute the exact value of $T$ using Algorithm \ref{algo1}. We used the standard $512 \times 512$ \textit{Lena}. See also Table \ref{table2}.} 
\label{spread} 
\centering
\begin{tabular}{|c|ccccccc|}
\hline
$\sigma_s$                             &  1   &  3     &  5    &  10    & 15    &  20 & 30  \\
\hline
Time (millisec)         &  70    &  71    &  73   &  71    & 70    &  72 & 71 \\
\hline
\end{tabular}
\label{table4}
\end{table}

\begin{table}[!htbp]
\caption{Comparison of the run times of the multithreaded Java implementations of \texttt{SHIFTABLE-BF} (as in \cite{KcDsMu2011}) and its proposed improvement, for different values of $\sigma_r$ (fixed $\sigma_s= 15$). We used the test image \textit{Checker} shown in Figure \ref{fig2}. 
Shown in the table are the different $\varepsilon$ used in Algorithm \ref{algo2}. We use $\infty$ to signify that the run time is impracticably large.} 
\label{spread} 
\centering
\begin{tabular}{|c|cccccc|}
\hline
$\sigma_r$                                  &  5  &     8 &   10 &   12 &    15 &   20 \\
\hline
\texttt{SHIFTABLE-BF} (millisec)            & $\infty$ &  $\infty$  &  $\infty$ &  2130  &  1280 &  800  \\
\hline
Improved \texttt{SHIFTABLE-BF} (millisec)   &  880     &  500       &  350      &  270   &  250  &  150    \\
\hline
$\varepsilon$  ($\%$ of peak value)         &  3  &  2 &  2 &  1 &   1 &   1     \\
\hline 
\end{tabular}
\label{table5}
\end{table}

We then compared the run times of multithreaded Java implementations of \texttt{SHIFTABLE-BF} proposed in \cite{KcDsMu2011} and its present refinement. For this, we used the test image \textit{Checker} shown in Fig. \ref{fig2}.  
We have used small values of $\sigma_r$, and a fixed $\sigma_s= 15$. The average run times are shown in Table \ref{table5}. The tolerance $\varepsilon$ used for the truncation are also given. 
We use a smaller $\varepsilon$ (larger truncation) as $\sigma_r$ gets small. 
Notice how we have been able to cut down the run time by more than $70\%$. This is not surprising, since we have discarded more than $85\%$ of terms. 
Notice that the run times are now well within $1$ second. The run time of the direct implementation (which does not depend on $\sigma_r$) was around $10$ seconds. The main point is that we can now implement the filter
in a reasonable amount of time for small $\sigma_r$, which could not be done previously in \cite{KcDsMu2011}.

\subsection{Accuracy}

We next studied the effect of truncation. To get an idea of the noise that is injected into the filter due to the truncation, we used the \textit{Checker} image. This particular image allowed us to test both the diffusive and 
the edge-preserving properties of the filter at the same time. We used the setting $\sigma_s= 30$ and $\sigma_r =10$. First, we tried the direct implementation of \eqref{BF}, using a very
fine discretization. Then we tried  Algorithm \ref{algo2}. The difference between the two outputs is shown in Figure \ref{fig3}. The artifacts shown in the image are actually quite insignificant, within $10^{-5}$ times 
the peak value. Notice that most of the artifacts are around the edges. This comes from the oscillations induced at the tails of kernel by the truncation.
To compare the filter outputs (with and without truncation), we extracted two horizontal scan profiles from the respective outputs. These are shown in Figure \ref{fig4}. Notice that it is rather hard to distinguish the two.

\begin{figure}
\centering
\includegraphics[width=0.4\linewidth]{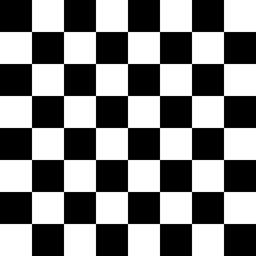} 
\caption{Test image \textit{Checker} of size $256 \times 256$ consisting of black (intensity $0$) and white (intensity $255$) squares. The bilateral filter acts as a diffusion filter in the 
interior of the squares, and as an edge-preserving filter close to the boundary. It preserves both the constant-intensity regions, and the jumps across squares.}
\label{fig2}
\end{figure}

We then applied the filters on the standard grayscale image of \textit{Lena}. We first applied the direct implementation followed by Algorithm \ref{algo2}. In this case, $T$ was computed to be $215$. 
The difference image is shown in Figure \ref{fig5}.  It is again seen that the small artifacts are cluttered near the edges. We have also tried measuring the mean-squared-error (MSE) for different $\sigma_r$. The results are 
given in Table \ref{table6}. Note that relatively larger MSEs are obtained at small $\sigma_r$. This is because we are forced to use a large truncation to speed up the filter at small $\sigma_r$. The above 
results show that we can drastically cut down the run time of filter using the proposed modifications, without incurring significant errors.

\begin{figure}
\centering
\includegraphics[width=0.7\linewidth]{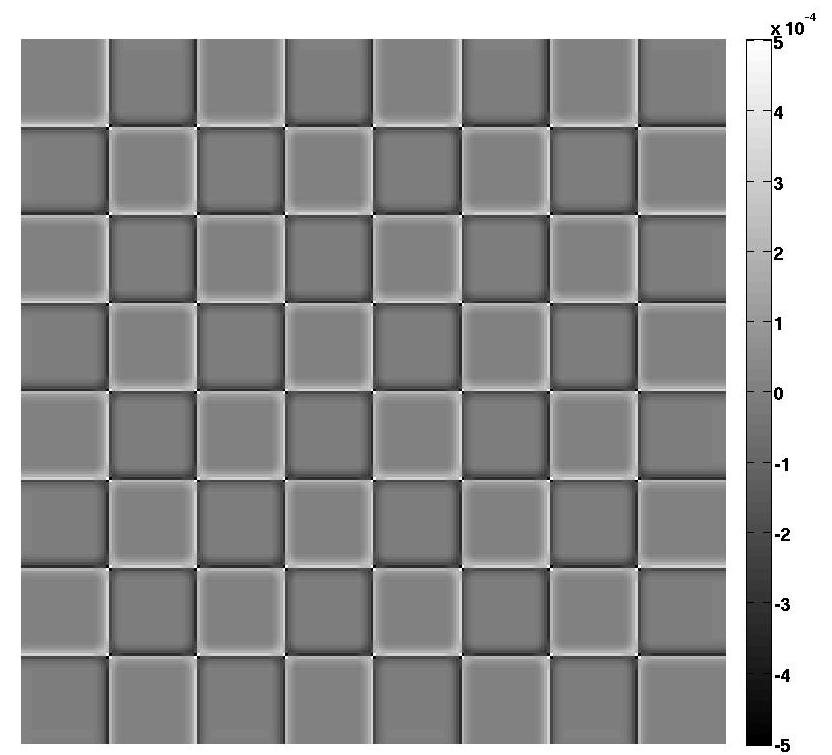} 
\caption{The difference between the outputs of the direct (high resolution) implementation of \eqref{BF}, and that obtained using Algorithm \ref{algo2}. The test image in Figure \ref{fig2}
was used as the input, and settings were $\sigma_s= 30$ and $\sigma_r =10$. The noise created due to the truncation is actually very small -- the error is within  $10^{-5}$ times the peak value. 
See the comparison of scan profiles in Figure \ref{fig4}.}
\label{fig3}
\end{figure}

\begin{figure}[!htp]
  \centering
  \subfloat[High resolution implementation.]{\label{}\includegraphics[width=0.5\linewidth]{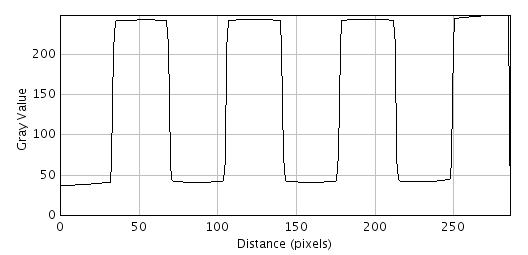}}  \\
  \subfloat[Our implementation.]{\label{}\includegraphics[width=0.5\linewidth]{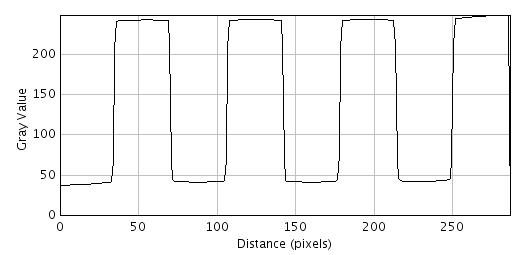}}
  \caption{Comparison of the respective scan profiles from the bilateral filter outputs (cf. description in Figure \ref{fig3}). }
  \label{fig4}
\end{figure}

\begin{figure}
\centering
\includegraphics[width=0.5\linewidth]{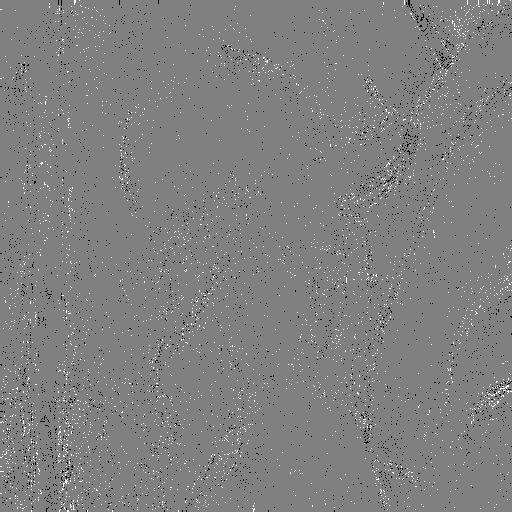} 
\caption{The difference between the outputs for the image \textit{Lena} (size $512 \times 512$). The settings were $\sigma_s= 30$ and $\sigma_r =10$. The noise created due to the truncation 
is within $10^{-4}$ times the peak value, and is thus practically insignificant. Notice that the errors are mainly around the edges.}
\label{fig5}
\end{figure}

\begin{table}[!htbp]
\caption{The mean-squared-error (MSE) between the filter outputs before and after truncation. We use the \textit{Lena} image, and a fixed $\sigma_s=30$. The tolerance $\varepsilon$ is chosen as in Table \ref{table5}.} 
\label{spread} 
\centering
\begin{tabular}{|c|cccccc|}
\hline
$\sigma_r$                    &  5  &     8 &   10 &   12 &    15 &   20 \\
\hline
$10\log_{10}(\mathrm{MSE})$       &  -9.3 &  -11.1 &  -11.8 &  -13.5  &  -13.8 &  -14.1  \\
\hline
\end{tabular}
\label{table6}
\end{table}

\subsection{Comparison with a benchmark algorithm}

We next compared the performance of the improved \texttt{SHIFTABLE-BF} algorithm with those proposed in \cite{Porikli}. The latter algorithms are considered as benchmark in the literature on fast bilateral filtering.
Porikli proposed a couple of algorithms in \cite{Porikli} -- one using a variable spatial filter and a polynomial range filter (we call this \texttt{BF1}), and the other using a constant spatial filter and a variable range 
filter (we call this \texttt{BF2}).
The difficulty with \texttt{BF1} is that it is rather difficult to control the width of the polynomial range filter. 
In particular, as was already mentioned in the introduction, it is difficult to approximate narrow Gaussian range
kernels using \texttt{BF1}. We refer the interested readers to the experimental results in \cite{KcDsMu2011}, where a comparison was already made between \texttt{BF1} and \texttt{SHIFTABLE-BF}. For completeness, we perform a
single experiment to compare these filters when $\sigma_r$ is small (a rather large value of $\sigma_r$ was used in the experiments in \cite{KcDsMu2011}). For this, and the remaining experiments, we will consider the standard test 
image of \textit{Barbara} of size $512 \times 512$. This image has several texture patterns, and is well-suited for comparing the performance of bilateral filters with narrow range kernels\footnote{we thank one of the reviewers
for suggesting this example.}. In particular, we consider the Gaussian
bilateral filter with $\sigma_s = 20$ and $\sigma_r=20$. The results obtained using \texttt{SHIFTABLE-BF} and \texttt{BF1} are shown in Figure \ref{fig6}. The error between the direct implementation of the bilateral filter and
\texttt{SHIFTABLE-BF} was within $10^{-3}$. On the other hand, note how \texttt{BF1} completely breaksdown. The reason for this was already mentioned in the introduction. A similar breakdown, with a larger $\sigma_r$, was 
also observed in Figure 3 in \cite{KcDsMu2011}. 

\begin{figure}[!htp]
  \centering
  \subfloat[Output of \texttt{SHIFTABLE-BF}.]{\label{}\includegraphics[width=0.4\linewidth]{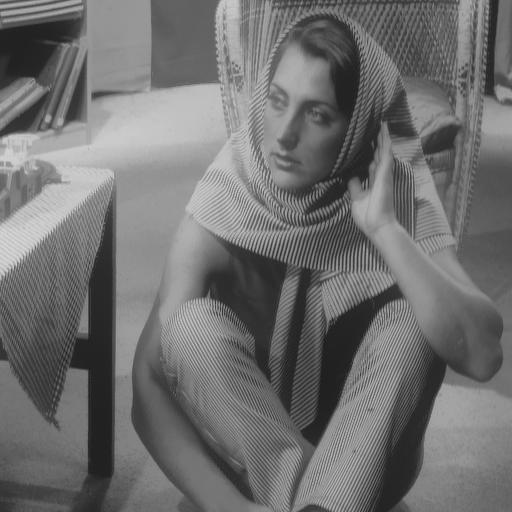}}  
  \subfloat[Output of Porikli's algorithm.]{\label{}\includegraphics[width=0.4\linewidth]{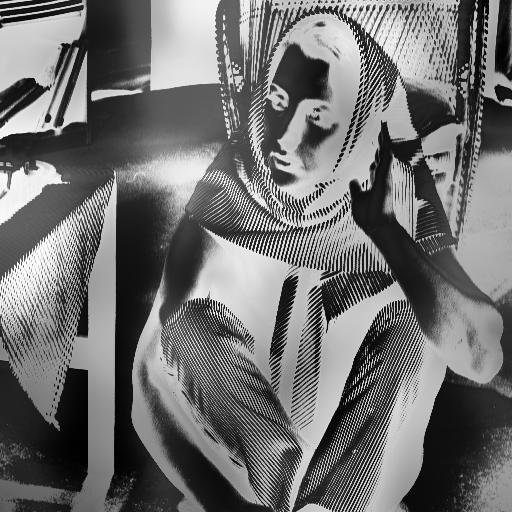}} 
  \caption{Comparison of the bilateral filtering results obtained using \texttt{SHIFTABLE-BF} and Porikli's polynomial-kernel algorithm \cite{Porikli}. In both cases, we used a Gaussian spatial filter ($\sigma_s=20$) and a 
  Gaussian range filter ($\sigma_r = 20$). For our algorithm, we set $\varepsilon = 0.03$. For Porikli's algorithm, we used a Taylor polynomial with order comparable to that of the raised cosine kernel.}
  \label{fig6}
\end{figure}

We next considered \texttt{BF2}, which does not suffer from the above problem. However, we note that \texttt{BF2} cannot be used to perform Gaussian bilater filtering -- it only works with constant spatial filters (box filters).
This is because \texttt{BF2} uses fast integral histograms, and this only works for box filters. To make the comparison even, we considered bilateral filters with constant spatial filters and Gaussian range kernels.
We note \texttt{SHIFTABLE-BF} can be trivially modified to work with arbitrary spatial filters.

\begin{figure}[!htp]
  \centering
  \subfloat[$\sigma_s = 2$.]{\label{}\includegraphics[width=0.5\linewidth]{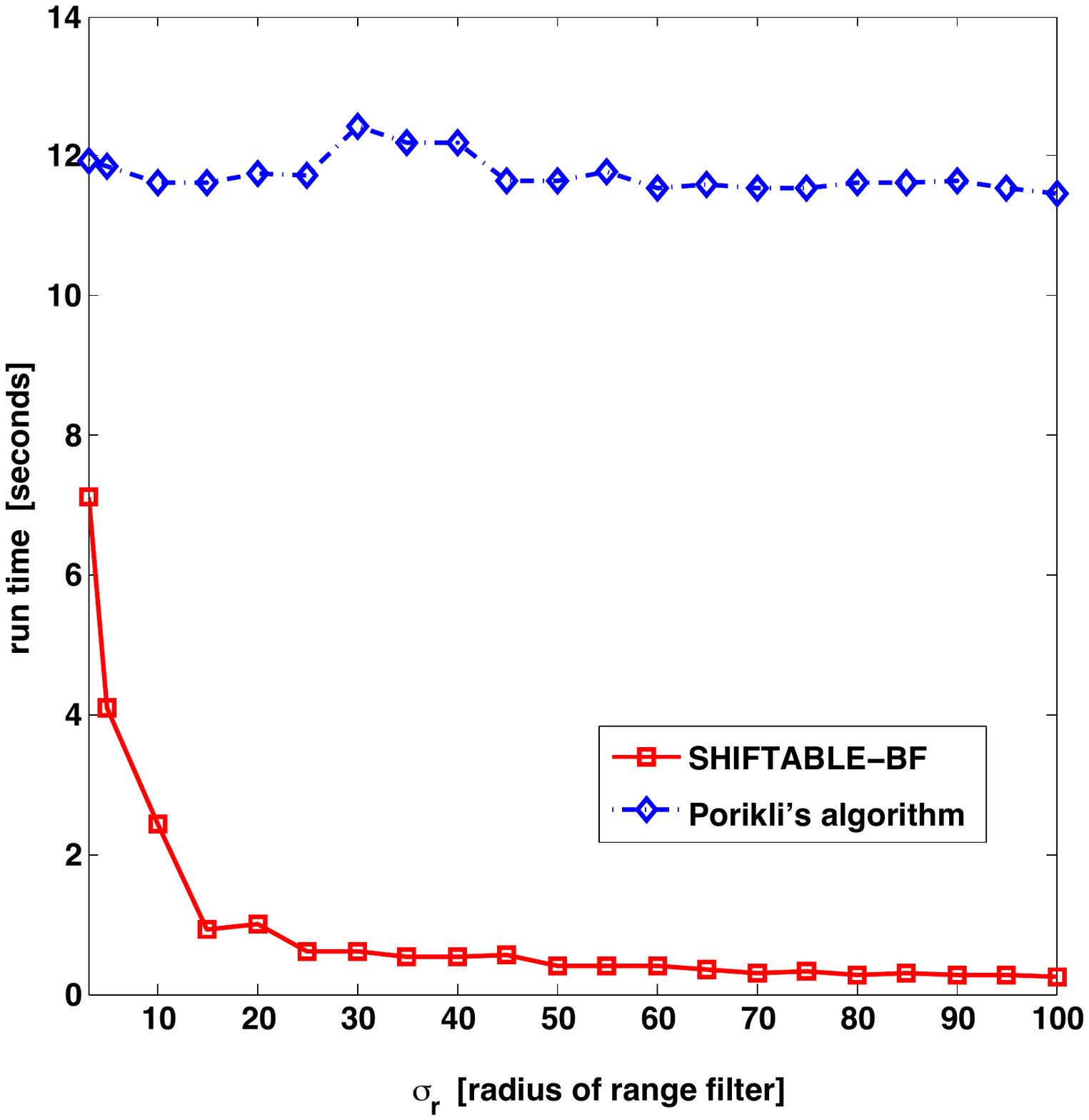}}  
  \subfloat[$\sigma_s = 6$.]{\label{}\includegraphics[width=0.5\linewidth]{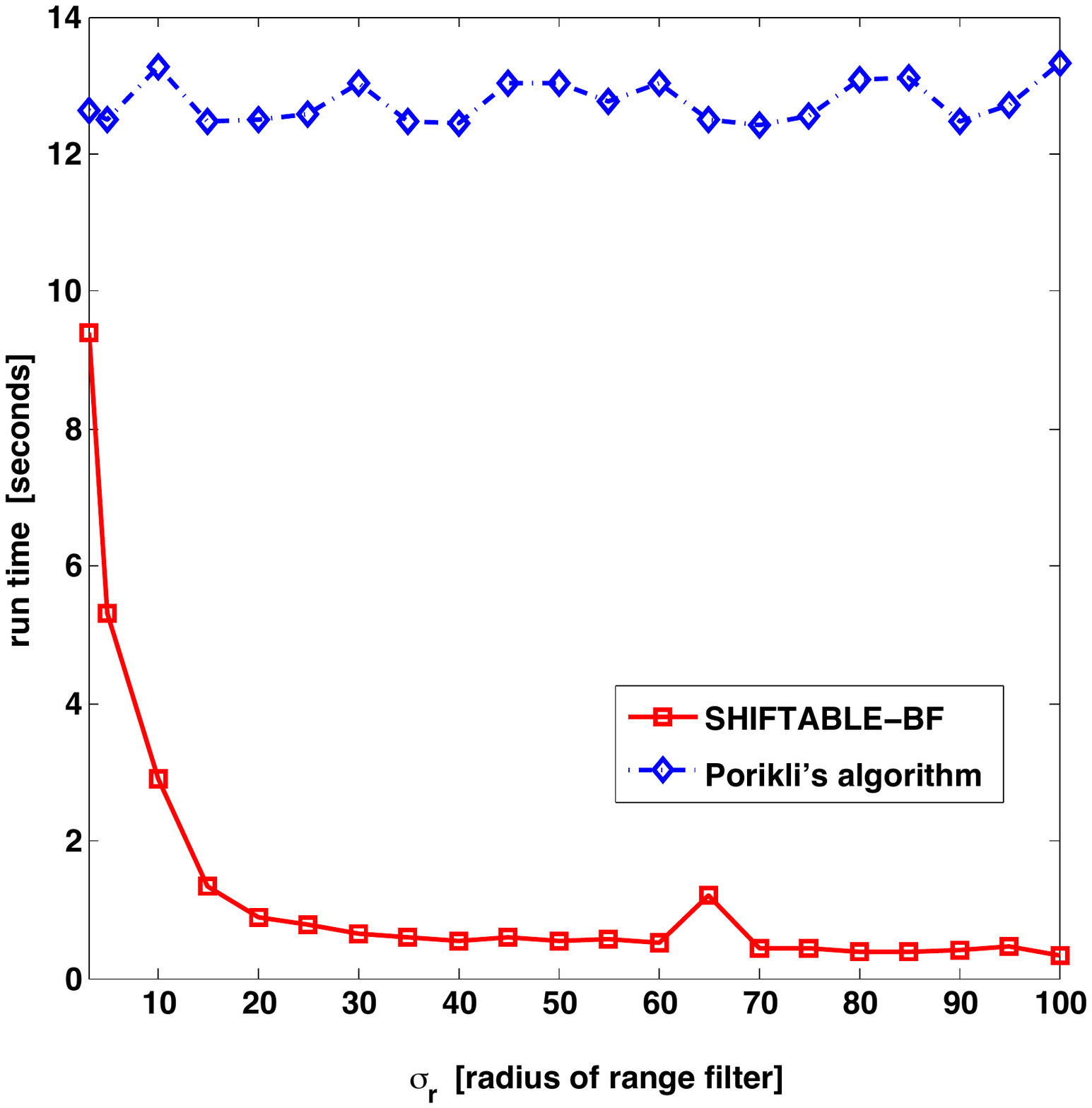}} \\
  \subfloat[$\sigma_s = 10$.]{\label{}\includegraphics[width=0.5\linewidth]{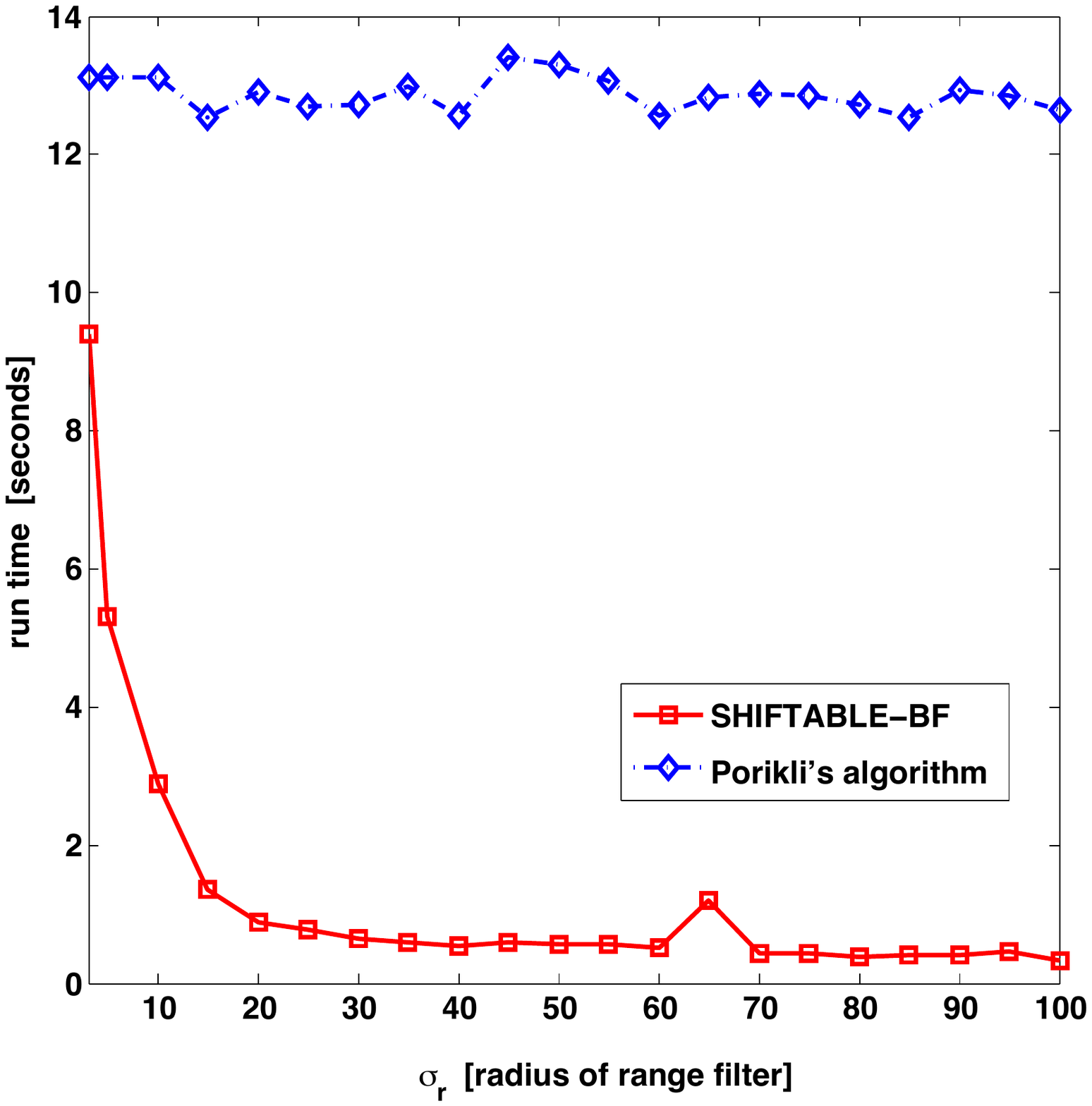}}  
  \subfloat[$\sigma_s = 14$.]{\label{}\includegraphics[width=0.5\linewidth]{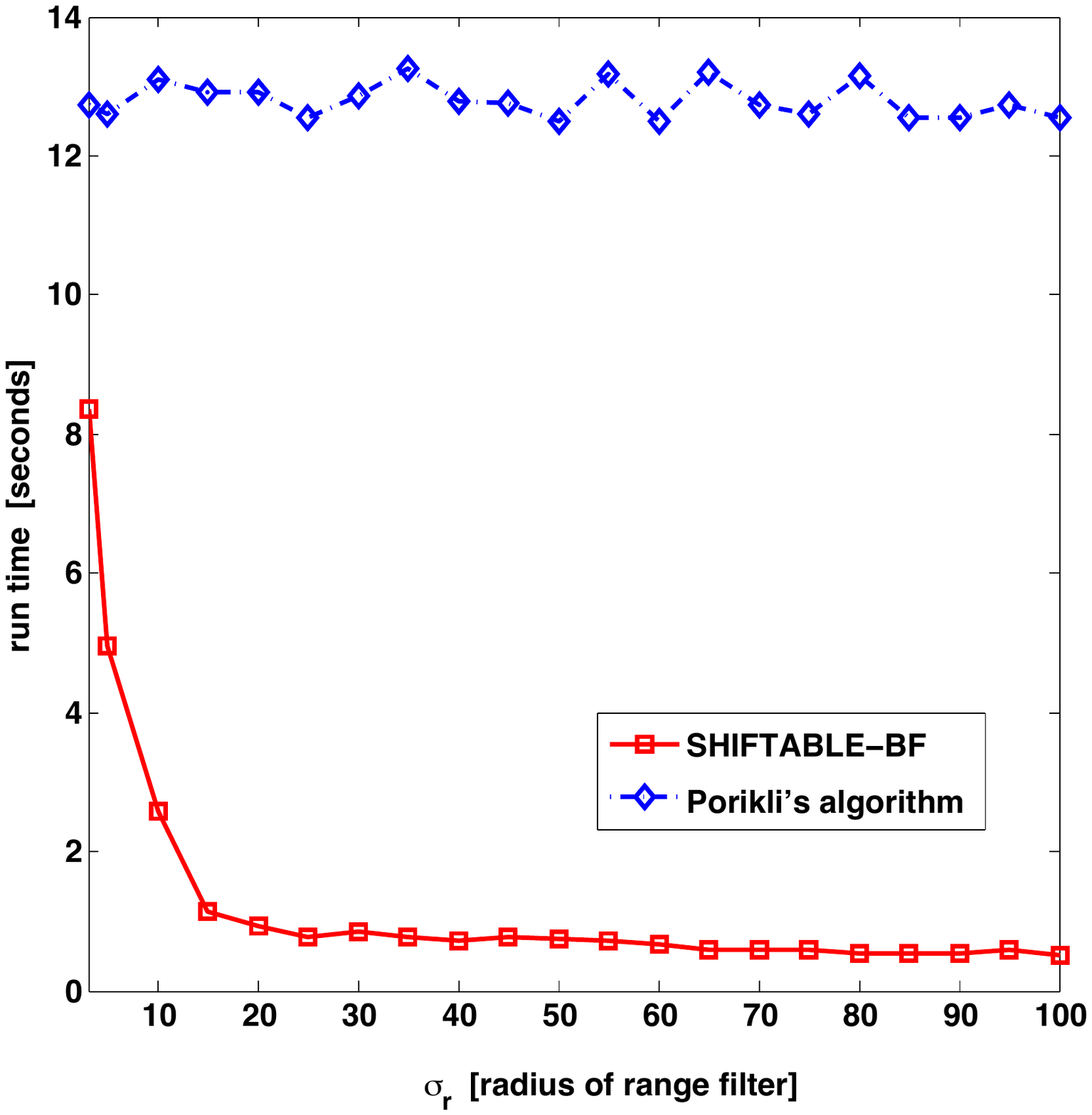}} 
  \caption{Comparison of the run times for different $\sigma_s$ and $\sigma_r$. We compare our algorithm \texttt{SHIFTABLE-BF} with Porikli's algorithm \texttt{BF2} (using integral histograms \cite{Porikli}). 
  In either case, we use a constant spatial filter and a Gaussian range filter. For our algorithm, we set $\varepsilon = 0.01$. For Porikli's algorithm, 
  we use the full resolution ($256$ bin) histogram. Both the algorithms were implemented in Matlab.}
  \label{fig7}
\end{figure}

First, we compared the run times of the Matlab implementations of \texttt{SHIFTABLE-BF} and \texttt{BF2}. The results obtained at particular settings of $\sigma_s$ (radius of box filter) and $\sigma_r$ are shown in 
Figure \ref{fig7}. We see that the run time of \texttt{SHIFTABLE-BF} is consistently better than that of \texttt{BF2}. The difference is particularly large when $\sigma_r > 10$, and it
closes down as $\sigma_r$ gets small. All these can be perfectly explained. Note that, as per the design, the computational complexity of  \texttt{BF2} is $O(1)$ both with respect to $\sigma_s$ and $\sigma_r$. This is 
indeed seen to be the case from the run times. On the other hand, the 
computational complexity of \texttt{SHIFTABLE-BF} is $O(1)$ with respect to $\sigma_s$ (this is again clear from the plots in Figure \ref{fig7}). However, for a given $\sigma_s$, the complexity of the the original algorithm scales as $O(1/\sigma_r^2)$. 
The complexity, in fact, remains roughly the same even after the proposed truncation. 
This explains the step rise in the run time for small values of $\sigma_r$, as shown in Figure \ref{fig7}. However, the actual run time goes down substantially as a result of the truncations (cf. Table \ref{table5}).
In particular, we have noticed that the worst case run time of \texttt{SHIFTABLE-BF} is less than the average run time of \texttt{BF2} for $\sigma_r$ as low as $3$.

\begin{figure}[!htp]
  \centering
  \subfloat[$\sigma_s = 2$.]{\label{}\includegraphics[width=0.5\linewidth]{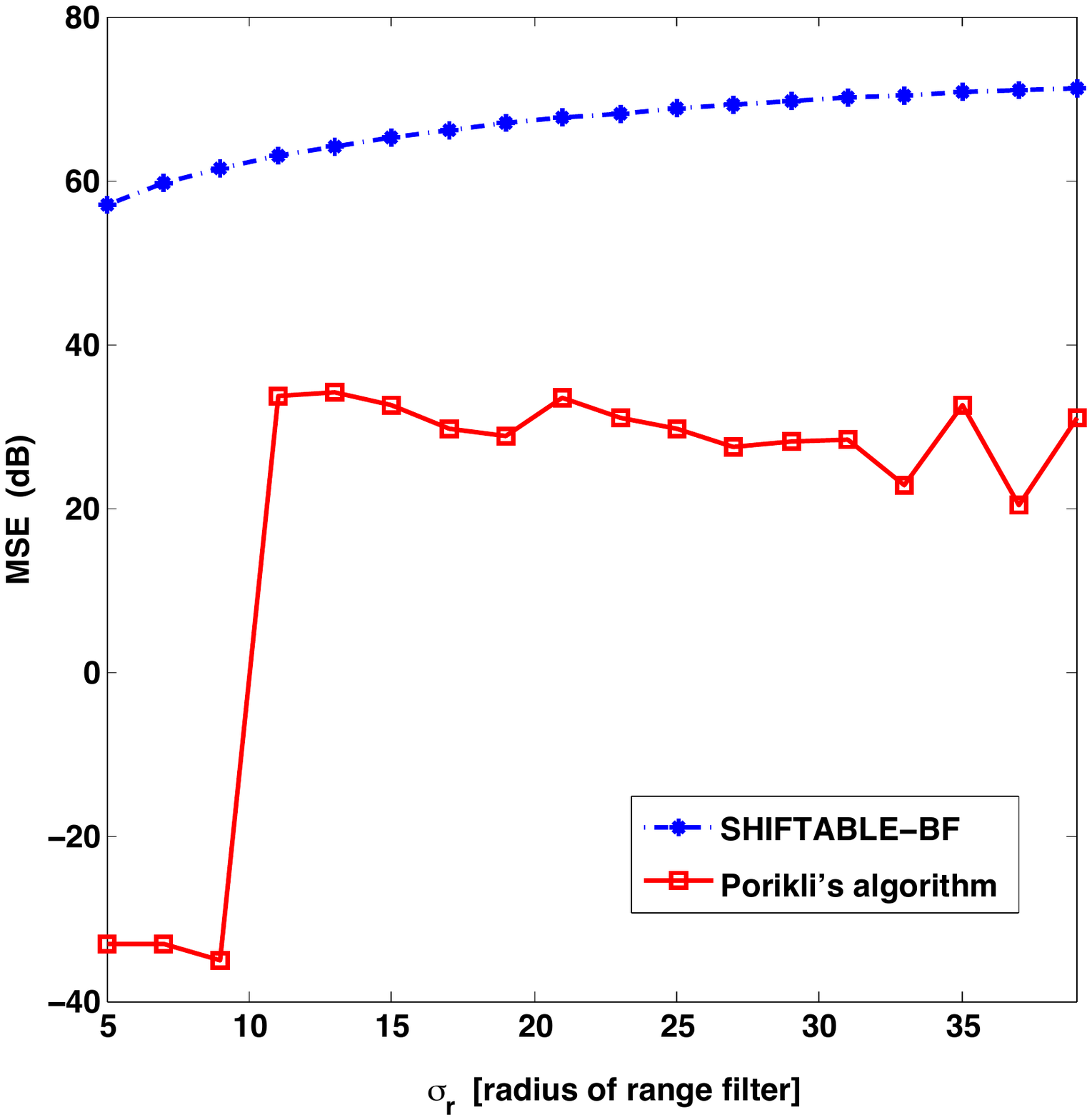}}  
  \subfloat[$\sigma_s = 6$.]{\label{}\includegraphics[width=0.5\linewidth]{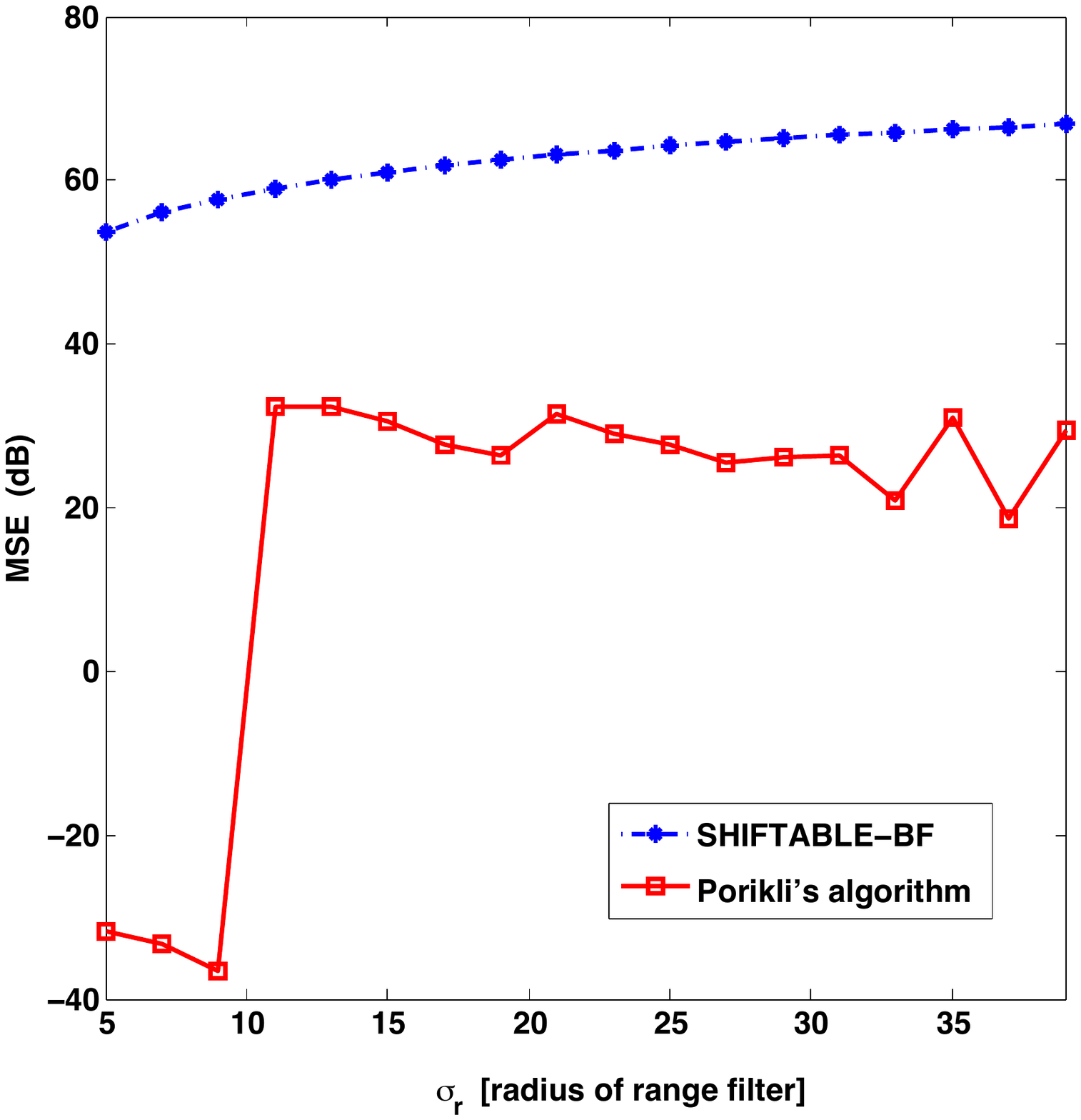}} \\
  \subfloat[$\sigma_s = 10$.]{\label{}\includegraphics[width=0.5\linewidth]{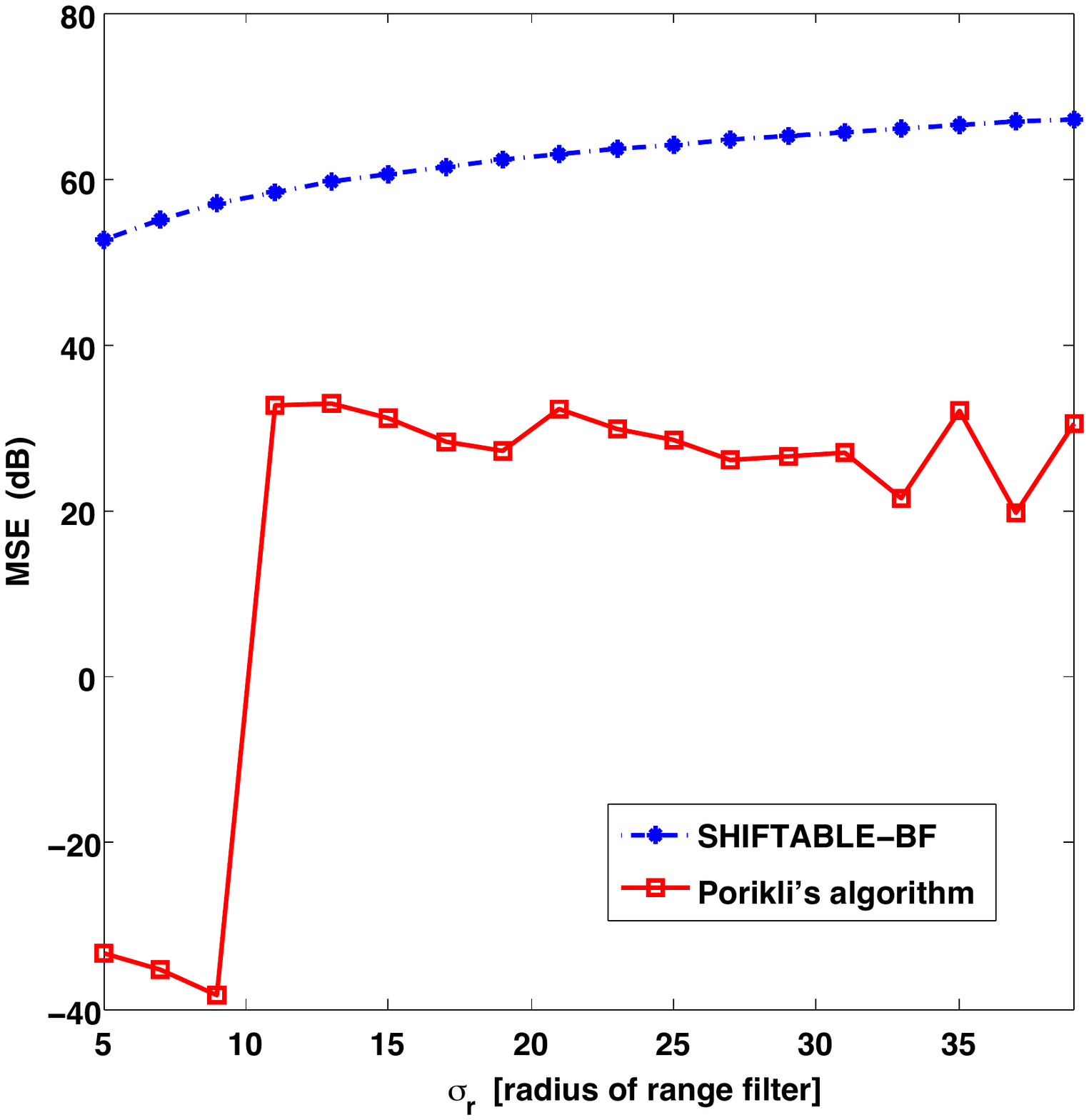}}  
  \subfloat[$\sigma_s = 14$.]{\label{}\includegraphics[width=0.5\linewidth]{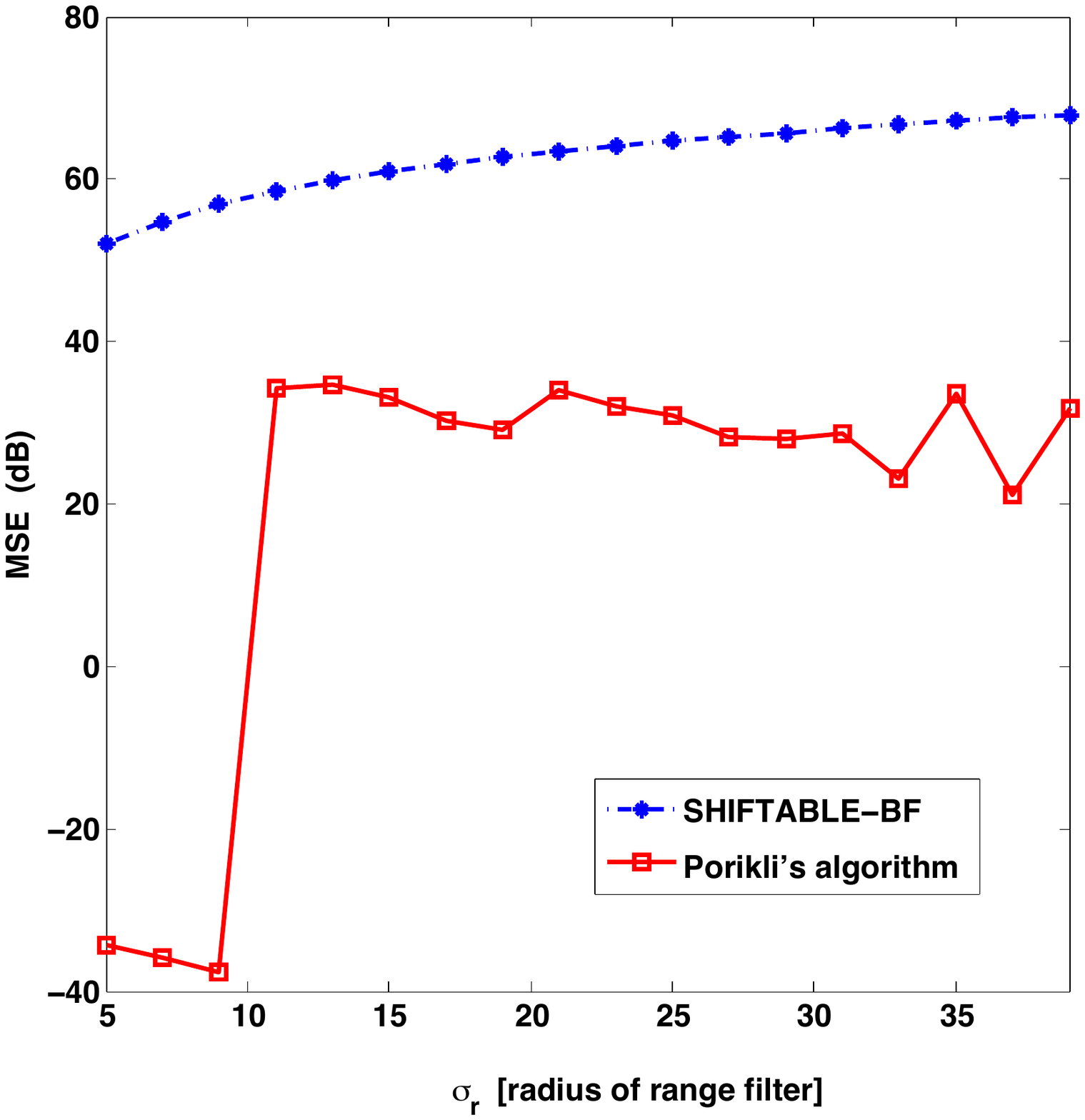}} 
  \caption{Comparison of the MSE for different $\sigma_s$ and $\sigma_r$. The MSEs are computed between the direct implementation and \texttt{SHIFTABLE-BF}, and between the direct implementation and \texttt{BF2}.
  The parameter settings are identical to those used in Figure \ref{fig7}.}
  \label{fig8}
\end{figure}

We note that the run time of \texttt{BF2} depends on the number of bins used for the integral histogram. In the above experiments, we used as many bins as the grayscale levels of the image. It is thus possible to reduce the
run time by cutting down the resolution of the histogram. However, this comes at the cost of the quality of the filtered image. This lead us to compare the outputs of \texttt{SHIFTABLE-BF} and \texttt{BF2}. 
In Figure \ref{fig8}, we compared the MSEs of the two algorithms for different $\sigma_s$ and $\sigma_r$ for the image \textit{Barbara}. We note that the MSE for \texttt{SHIFTABLE-BF} is significantly lower than 
\texttt{BF2}. The gap is around $30$ dB for $\sigma_r > 10$, and around $90$ dB when $\sigma_r \leq 10$. We noticed that this difference becomes even more pronounced if we use a smaller number of bins. As expected, note 
that the individual MSEs do not vary much with $\sigma_s$. The reader will notice that the MSE of \texttt{SHIFTABLE-BF} suddenly 
drops by $60$ dB when $\sigma_r \leq 10$. To explain this, we plot the effective order $N-2M$ for different $\sigma_r$ in Figure \ref{fig9}. We see that the order (of approximation) suddenly jumps up when 
$\sigma_r$ goes below $10$, which explains the jump in the MSE in Figure \ref{fig8}. This is simply due to the rule \eqref{Chernoff} used in Algorithm \ref{algo2} 

\begin{figure}
\centering
\includegraphics[width=0.5\linewidth]{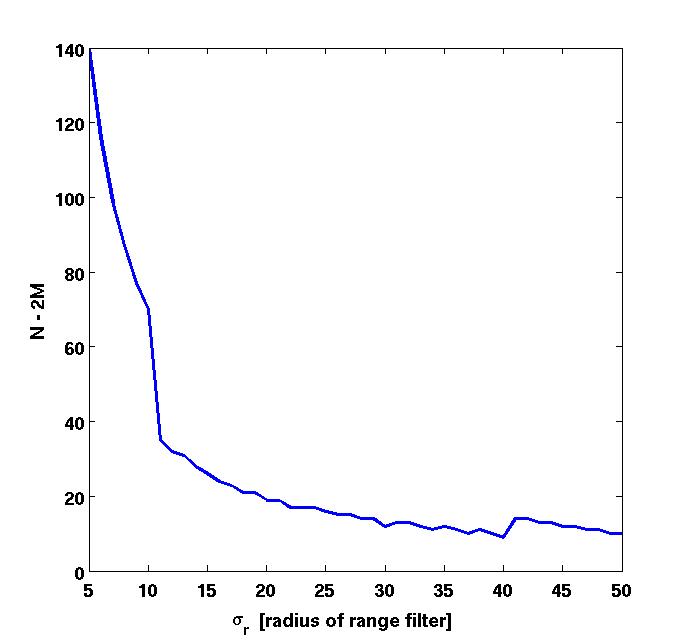} 
\caption{Dependence of order on $\sigma_r$. Here $N$ and $M$ are as defined in Algorithm \ref{algo2}. The sudden jump at $\sigma_r =10$ is due to the truncation rule in \eqref{Chernoff}.}
\label{fig9}
\end{figure}

In Figure \ref{fig10}, the filtered outputs of the two algorithms are compared with the direct implementation, for a small value of $\sigma_r$. Note that the pointwise error between the direct implementation and \texttt{SHIFTABLE-BF} is
of the order $10^{-3}$. On the other hand, the corresponding error between the direct implementation and \texttt{BF2} is substantial, a few orders larger than that for \texttt{SHIFTABLE-BF}. This explains the large gap between
the MSEs in Figure \ref{fig8}.

\begin{figure}[!htp]
  \centering
  \subfloat[Test image \textit{Barbara} (size $512 \times 512$).]{\label{}\includegraphics[width=0.5\linewidth]{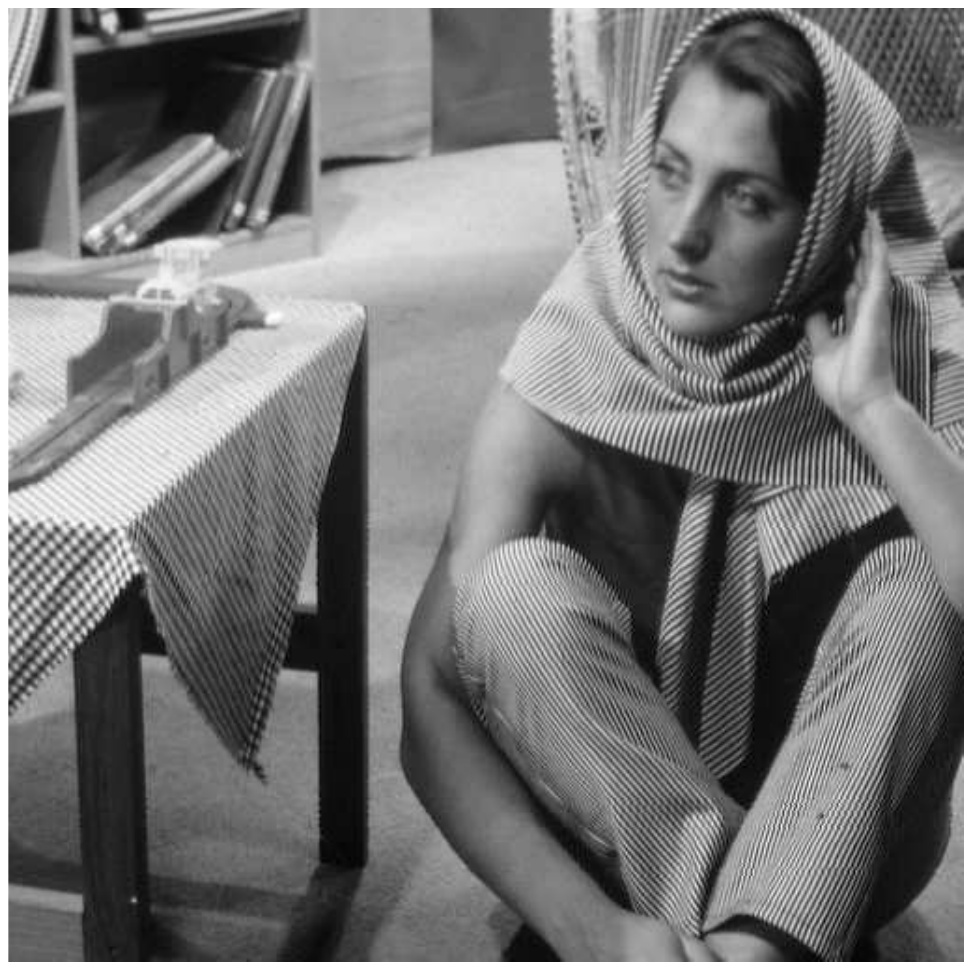}}  
  \subfloat[Bilateral filter output (direct implementation).]{\label{}\includegraphics[width=0.5\linewidth]{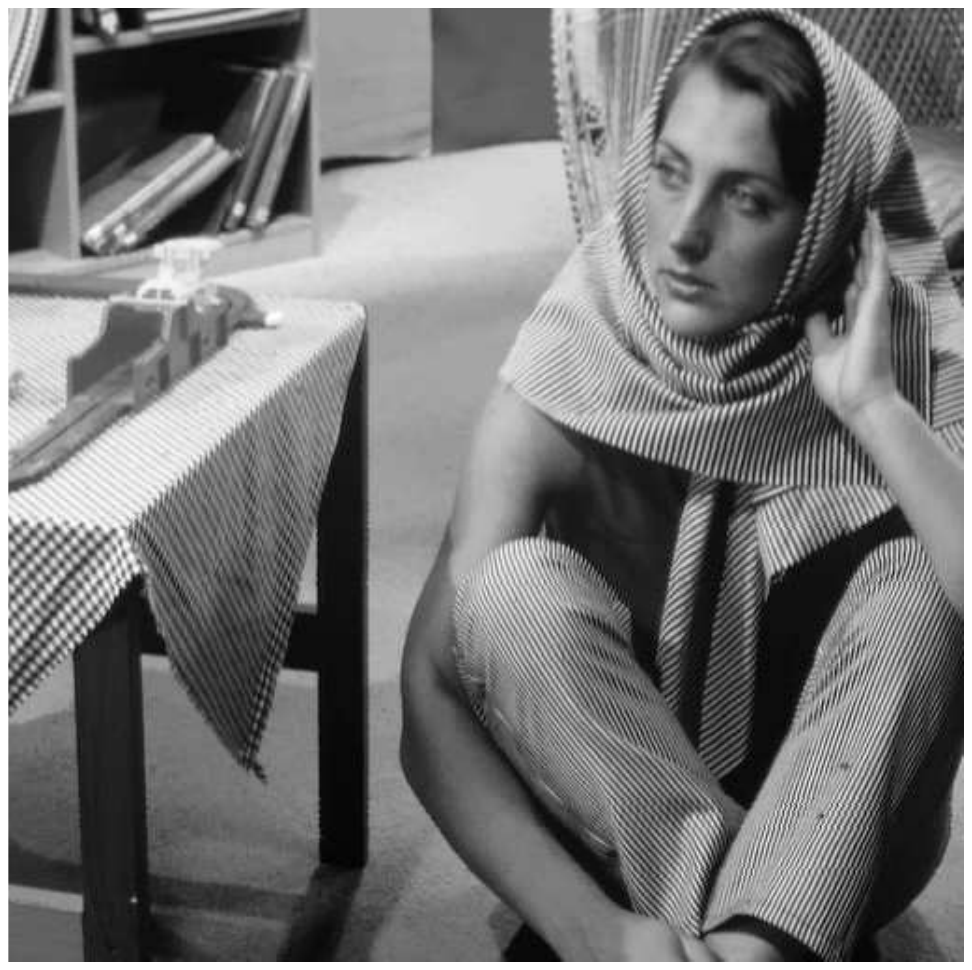}} \\
  \subfloat[Error between (b) and our method.]{\label{}\includegraphics[width=0.5\linewidth]{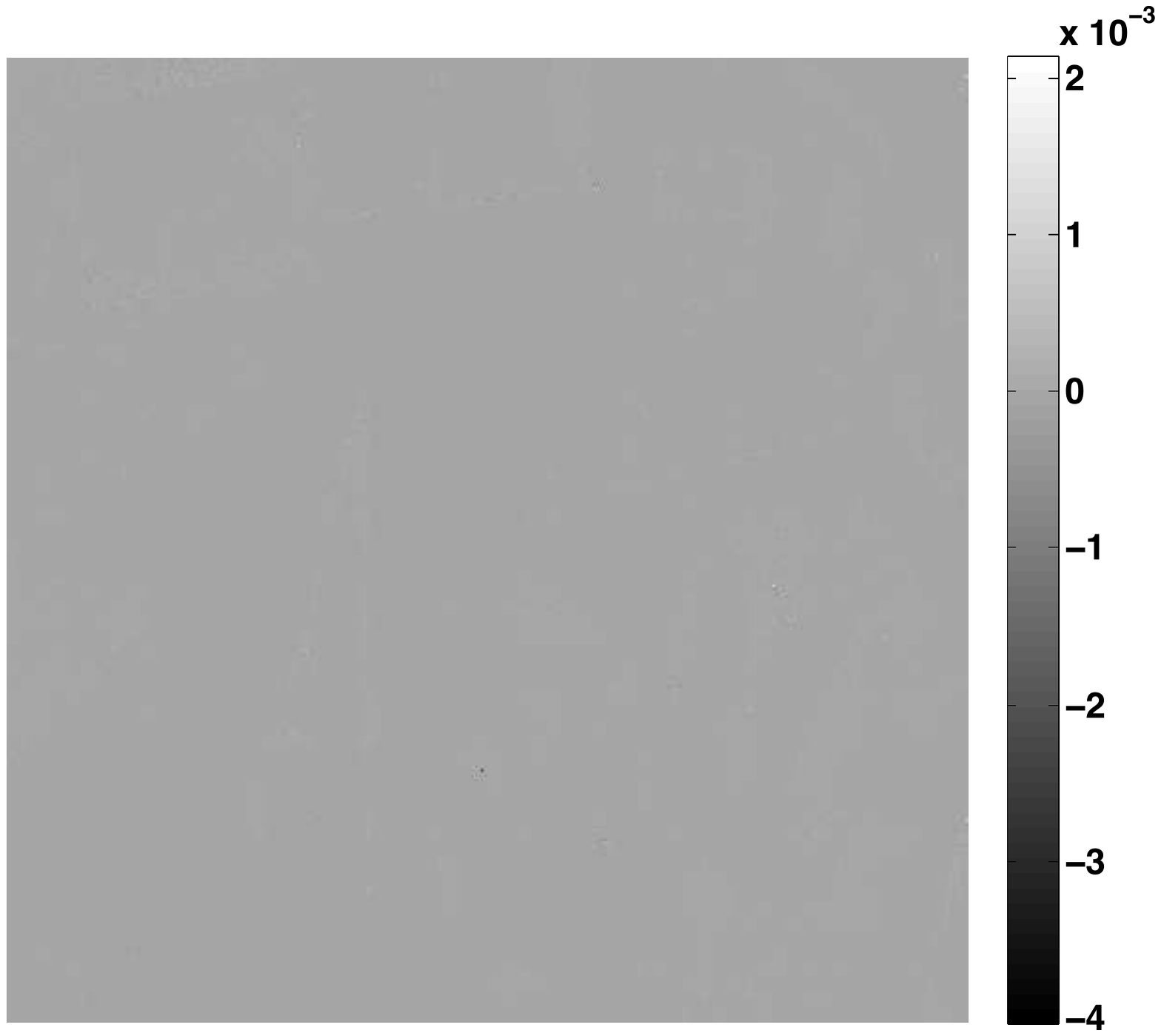}}  
  \subfloat[Error between (b) and Porikli's method.]{\label{}\includegraphics[width=0.5\linewidth]{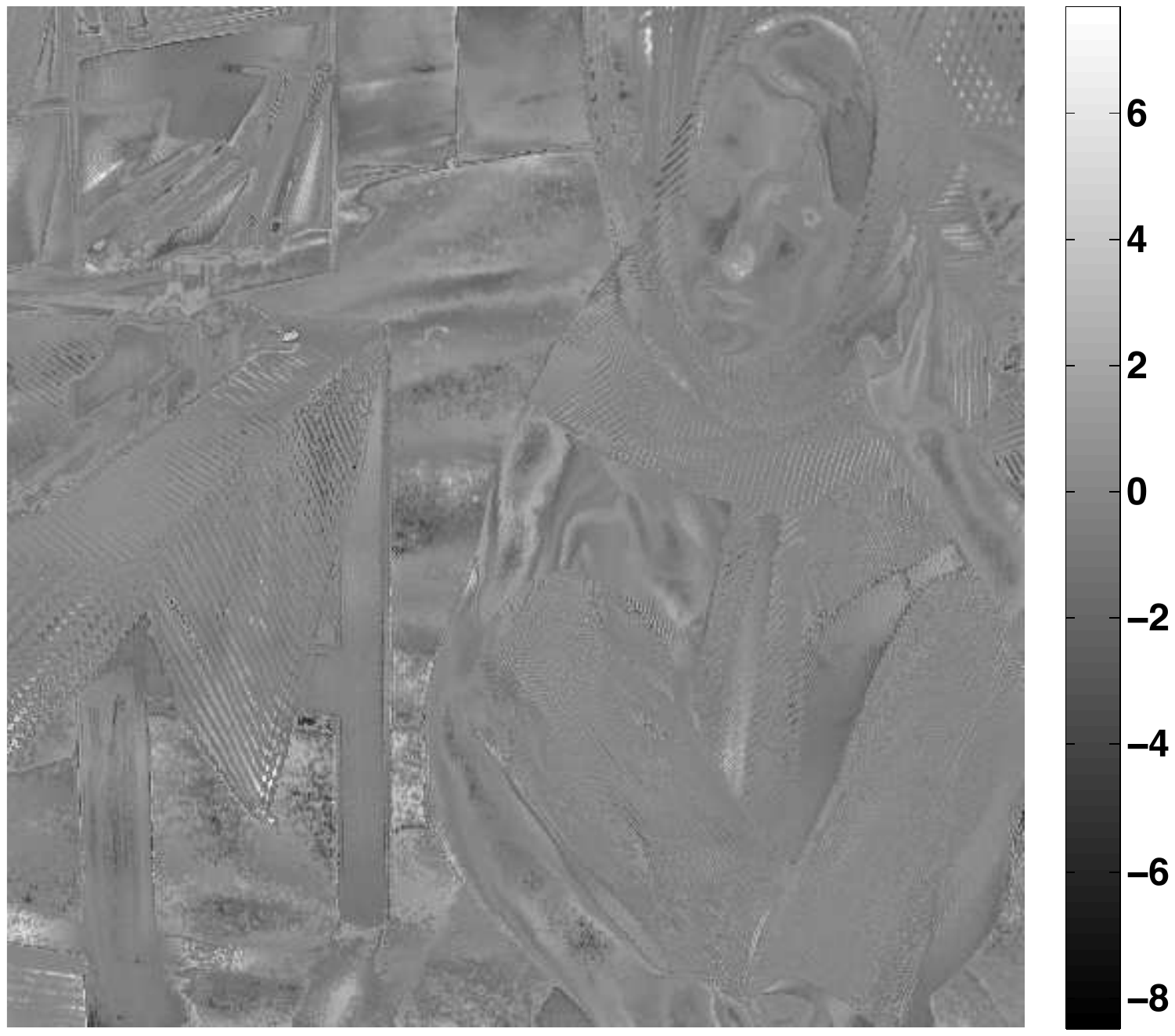}} 
  \caption{Comparison of the results obtained using \texttt{SHIFTABLE-BF} with the direct implementation (high resolution) and 
  Porikli's algorithm \texttt{BF2}. In all three cases, we used a constant spatial filter (radius = $20$) and a Gaussian range filter ($\sigma_r = 5$). For \texttt{SHIFTABLE-BF}, we set $\varepsilon = 0.01$. For \texttt{BF2}, we 
  used the best possible resolution ($256$ bin histogram). The run times of the Matlab implementations were: Direct implementation ($37$ seconds), \texttt{BF2} ($13$ seconds), and \texttt{SHIFTABLE-BF} ($6$ seconds).}
  \label{fig10}
\end{figure}

We close this section by commenting on the memory usage of \texttt{SHIFTABLE-BF} and \texttt{BF2}. The former requires us to compute and store a total of $2(2N-M)$ images, while the latter requires us to 
store a histogram with $B$ bins per pixel (equivalent of $B$ images). It is clear from Figure \ref{fig9} that even for a half-resolution histogram  ($B=128$) and for $\sigma_r > 10$, the memory requirement of \texttt{BF2} is comparable to that of \texttt{SHIFTABLE-BF}. 
For smaller values of $\sigma_r$, \texttt{SHIFTABLE-BF} clearly requires more memory than \texttt{BF2}.

\section{Discussion}

In this paper, we proposed some simple ways of accelerating the bilateral filtering algorithm proposed in \cite{KcDsMu2011}. This, in particular, opened up the possibility of implementing the algorithm in real-time
for small $\sigma_r$. We note that the problem of determining the optimal $\sigma_s$ and $\sigma_r$ for a given application is extrinsic to our algorithm. We are only required to determine the parameter $T$, which is intrinsic
to our algorithm. A fast algorithm was proposed in the paper for this purpose. However, we note that having a fast algorithm does make it easier to 
determine the optimal parameters. In this regard, we note that Kishan et al. have recently shown how our fast algorithm can be used to tune the parameters for image denoising, under different noise models 
\cite{Kishan2012a,Kishan2012b}. One crucial observation used in these papers is that the a certain unbiased estimator of the MSE can be efficiently computed for our fast bilater filter, using the linear
expansions in \eqref{f1} and \eqref{f2}. The ``best'' parameters are choosen by optimizing this MSE estimator. While this can also be done for the polynomial-based bilateral filter in \cite{Porikli}, 
this trick cannot be used for other fast implementations of the bilateral filter, at least to the best of our knowledge.

Finally, we note that the ideas  proposed here can also be extended to the $O(1)$ algorithm for non-local means given in \cite{Kc2011}. In non-local means \cite{Baudes2005}, the range kernel operates on patches 
centered around the pixel of interest. A coarse non-local means was considered in \cite{Kc2011}, where a small patch neighborhood consisting of the pixels $\u_1,\ldots,\u_p$ (where, say, $\u_1=0$) was
used. In this case, the main observation was that formula for the non-local means can be written in terms of following sums:
\begin{align}
\label{NL1}
\int_{\lVert \y \rVert \leq R} f(\x-\y) & g(f(\x+\u_1)-f(\x-\y+\u_1),\ldots, \\ \nonumber
 & f(\x+\u_p)-f(\x-\y+\u_p)) \ d\y,
\end{align}
and
\begin{align}
\label{NL2}
\int_{\lVert \y \rVert \leq R} & g(f(\x+\u_1)-f(\x-\y+\u_1),\ldots, \\ \nonumber
 & f(\x+\u_p)-f(\x-\y+\u_p)) \ d\y,
\end{align}
where $g(s_1,\ldots,s_p)$ is an anisotropic Gaussian in $p$ variables, and has a diagonal covariance. This looks very similar to \eqref{BF}, except that we now
have a multivariate range kernel. By using the separability of $g(s_1,\ldots,s_p)$, and by approximating each Gaussian component by either \eqref{RCconv} or \eqref{RPconv},
a $O(1)$ algorithm for computing \eqref{NL1} and \eqref{NL2} was developed. We refer the readers to \cite{Kc2011} for further details. The key consideration with this algorithm is that the overall order scales as $N^p$, 
where $N$ is the order of the Gaussian approximation for each component. In this case, it is thus important to keep $N$ as low as possible for a given covariance.
After examining \eqref{NL1} and \eqref{NL2}, it is clear that the interval over which the one dimensional Gaussians need to be approximated is $[-T,T]$, where $T$ is as defined in \eqref{bound}.
We can compute this using Algorithm \ref{algo1}. Moreover, we can further reduce the order by truncation, especially when the covariance is small. However, $N^p$ can still be large (even for $p=3$ or $4$),
and hence a parallel implementation must be used for real-time implementation.

\section{Appendix : MAX-FILTER algorithm}

We explain how the \texttt{MAX-FILTER} algorithm works in one dimension. Let $f_1,f_2,\ldots, f_N$ be given, and we have to compute $\max (f_{i-R}, \ldots, f_{i+R})$ at every interior point $i$. Assume $R$ is an integer,
and $N$ is a multiple of the window size $W = 2R+1$, say, $N= pW$ (padding is used if this not the case). The idea is to compute the local maximums using running maximums, similar to running sums used for local 
averaging \cite{Heckbert}. The difference here is that, unlike averaging, the max operation is not linear. This can be fixed using ``local'' running maximums.

We begin by dividing $f_1,f_2,\ldots, f_N$ into $p$ equal partitions. The $k$th partition ($k = 0,1,\ldots,p-1$) is composed of $f_{1 + kW}, \ldots, f_{W + kW}$. 
For a given partition $k$, we recursively compute the two running maximums (of length $W$), one from from the left and one from the right.
Let $l^{(k)}$ and $r^{(k)}$ be the left and right running maximums for the $k$th partition. The sequence $l^{(k)}$ start at the left of the partition with $l^{(k)}_1=f_{1 + kW}$, and 
is recursively given by $l^{(k)}_i =  \max  ( \ l^{(k)}_{i-1}, f_{i + kW} \ )$ for $i=2,3,\ldots,W$. It ends on the right end of the partition. On the other hand, $r^{(k)}$ start at the right and ends on 
the left: $r^{(k)}_W = f_{W + kW}$, and $r^{(k)}_{W-i} =  \max  ( \ r^{(k)}_{W-i+1}, f_{W - i + kW} \ )$ for $i=1,2,\ldots,W-1$. This is done for every partition to get $l^{(0)},\ldots,l^{(p-1)}$ and 
$r^{(0)},\ldots,r^{(p-1)}$. 

We now concatenate the left maximums into a single function $l_1,\ldots,l_N$, that is, we set $l = (l^{(0)},\ldots,l^{(p-1)})$. Similarly, we concatenate the right maximums in order, $r = (r^{(0)},\ldots,r^{(p-1)})$.
In practice, we just need to recursively compute $l_1,\ldots,l_N$ and $r_1,\ldots,r_N$, resetting the recursion at the boundary of every partition. We now split $f_{i-R}, \ldots, f_{i+R}$ into two segments, which 
either belong to the same partition or two adjacent partitions. Then, from the associativity of the max operation, it is seen that $\max (f_{i-R}, \ldots, f_{i+R}) = \max (r_{i-R},l_{i+R})$. Note that, we need just $3$ max operations per point to get the result, independent of the window size $R$.

\section{Acknowledgments}

This work was partly supported by the Swiss National Science Foundation under grant PBELP2-$135867$. The author thanks M. Unser and D. Sage for interesting discussions, and the anonymous referees for their 
helpful comments and suggestions. The author also thanks A. Singer and the Program in Applied and Computational Mathematics at Princeton University for hosting him during this work.


\begin{thebibliography}{9}

\bibitem{Tomasi1998} C. Tomasi and R. Manduchi, ``Bilateral filtering for gray and color images,'' Proc. IEEE International Conference on Computer Vision, pp. 839-846, 1998.

\bibitem{Baudes2005} A. Buades, B. Coll, and J.M. Morel, ``A review of image denoising algorithms, with a new one,'' Multiscale Modeling and Simulation, vol. 4, pp. 490-530, 2005.


\bibitem{Yaroslavsky1985} L.P. Yaroslavsky, \textit{Digital Picture Processing--An Introduction}, Springer-Verlag, Berlin, 1985.

\bibitem{Bennet} E.P. Bennett, J.L. Mason, and L. McMillan, ``Multispectral bilateral video fusion,'' IEEE Transactions on Image
Processing, vol. 16, pp. 1185-1194, 2007.


\bibitem{VideoAbstraction} H. Winnemoller, S. C. Olsen, and B. Gooch, ``Real-time video abstraction,'' ACM Siggraph, pp. 1221-1226, 2006.


\bibitem{Ramanath} R. Ramanath and W. E. Snyder, ``Adaptive demosaicking,'' Journal of Electronic Imaging, vol. 12, pp. 633-642, 2003.

\bibitem{Sapiro2005} M. Mahmoudi and G. Sapiro, ``Fast Image and Video Denoising via Nonlocal Means of Similar Neighborhoods,'' IEEE Signal Processing Letters, vol. 12, no. 12, pp. 839-842, Dec. 2005.

\bibitem{Xiao} J. Xiao, H. Cheng, H. Sawhney, C. Rao, and M. Isnardi, ``Bilateral filtering-based optical flow estimation with
occlusion detection,'' European Conference on Computer Vision, pp. 211-224, 2006.


\bibitem{Yang_BLF_stereo} Q. Yang, L. Wang, R. Yang, H. Stewenius, and D. Nister, ``Stereo matching with color-weighted correlation, hierarchical belief propagation and occlusion handling,'' IEEE Transactions on 
Pattern Analysis and Machine Intelligence, vol. 31, pp. 492-504, 2009.

\bibitem{DurandApps}  S. Paris, P. Kornprobst, J. Tumblin, and F. Durand, ``Bilateral Filtering: Theory and Applications,'' Foundations and Trends in Computer Graphics and Vision, vol. 4, no. 1, pp. 1-73, 2009.

\bibitem{Durand} F. Durand and J. Dorsey, ``Fast bilateral filtering for the display of high-dynamic-range images,'' ACM Siggraph, vol. 21, pp. 257-266, 2002.


\bibitem{Weiss} B. Weiss, ``Fast median and bilateral filtering,'' ACM Siggraph, vol. 25, pp. 519-526, 2006.

\bibitem{Yang} Q. Yang, K.-H. Tan, and N. Ahuja, ``Real-time $O(1)$ bilateral filtering,'' IEEE Conference on Computer Vision and Pattern Recognition, pp. 557-564, 2009.

\bibitem{Porikli} F. Porikli, ``Constant time $O(1)$ bilateral filtering,'' IEEE Conference on Computer Vision and Pattern Recognition, pp. 1-8, 2008.


\bibitem{Paris} S. Paris and F. Durand, ``A fast approximation of the bilateral filter using a signal processing approach,'' European Conference on Computer Vision, pp. 568-580, 2006.

\bibitem{kunal_tip} K. N. Chaudhury, A. M.-Barrutia, and M. Unser, ``Fast space-variant elliptical filtering using box splines,'' IEEE Transactions on Image Processing, vol. 19, pp. 2290-2306, 2010.


\bibitem{Herk1992} M. van Herk, ``A fast algorithm for local minimum and maximum filters on rectangular and octagonal kernels,'' Pattern Recognition Letters, vol. 13, no. 7, pp. 517-521, 1992.


\bibitem{GilWerman1992} J. Gil and M. Werman, ``Computing $2$-d min, median, and max filters,'' IEEE Transactions on Pattern Analysis and Machine Intelligence, vol. 15, no. 5, pp. 504-507, 1993. 


\bibitem{KcDsMu2011} K. N. Chaudhury, Daniel Sage, and M. Unser, ``Fast O(1) bilateral filtering using trigonometric range kernels,'' IEEE Transactions on Image Processing,  vol. 20, no. 12, pp. 3376-3382, 2011.

\bibitem{Mcode2012} K. N. Chaudhury, \textit{Fast Bilateral Filter} (www.mathworks.com/matlabcentral/fileexchange/36657), MATLAB Central File Exchange, retrieved May 21, 2012.

\bibitem{Heckbert}  P. S. Heckbert, ``Filtering by repeated integration,'' International Conference on Computer Graphics and Interactive Techniques, vol. 20, no. 4, pp. 315-321, 1986.

\bibitem{Alon} N. Alon and J. Spencer, \textit{The Probabilistic Method}, Wiley-Interscience, 2000. 

\bibitem{Crow} F. C. Crow, ``Summed-area tables for texture mapping,'' ACM Siggraph, vol. 18, pp. 207-212, 1984.


\bibitem{Kc2011} K. N. Chaudhury, ``Constant-time filtering using shiftable kernels,'' IEEE Signal Processing Letters, vol. 18, no. 11, pp. 651-654, 2011.

\bibitem{Kishan2012a} H. Kishan and C. S. Seelamantula, ``SURE-Fast bilateral filters,'' presented at IEEE International Conference on Acoustics, Speech, and Signal Processing (ICASSP), 2012.

\bibitem{Gunturk2011} B. K. Gunturk, ``Fast Bilateral Filter With Arbitrary Range and Domain Kernels,'' IEEE Transactions on Image Processing, vol. 20, no. 9, pp. 2690-2696, 2011.

\bibitem{Adams2010}  A. Adams, J. Baek, M. A. Davis, ``Fast high-dimensional filtering using the permutohedral lattice,'' Computer Graphics Forum, vol. 29, no. 2, pp. 753-762, 2010. 

\bibitem{Kishan2012b} H. Kishan and C. S. Seelamantula, ``Optimal parameter selection for bilateral filter using Poisson unbiased risk estimate,'' accepted in IEEE International Conference on Image Processing (ICIP), 2012.


\end{thebibliography}
\end{document}